\theoremstyle{plain}
\newtheorem{theorem}{Theorem}
\newtheorem{lemma}[theorem]{Lemma}
\theoremstyle{definition}
\newtheorem{definition}[theorem]{Definition}
\newtheorem{experiment}[theorem]{Experiment}
\newcommand{\Rdk}{\mathbb{R}^{d\times k}}
\newcommand{\hmu}{\bm{\mu}^\natural}
\newcommand{\bUnk}{\mathcal{U}_{n,k}}
\newcommand{\bUkk}{\mathcal{U}_{k,k}}
\newcommand{\la}{\big\langle}
\newcommand{\ra}{\big\rangle}
\newcommand{\bmF}{\bm{F}}
\newcommand{\bmu}{\bm{\mu}}
\newcommand{\bmpi}{\bm{\pi}}
\newcommand{\trace}{\textnormal{trace}}
\newcommand{\diag}{\textnormal{diag}}
\newcommand{\cA}{\mathcal{A}}
\title{BalLOT: Balanced $k$-means clustering with optimal transport}
\author{
Wenyan Luo\thanks{Department of Mathematics, The Ohio State University, Columbus, Ohio, USA}
\and
Dustin G.\ Mixon\footnotemark[1] \thanks{Translational Data Analytics Institute, The Ohio State University, Columbus, Ohio, USA}
}
\date{}
\begin{document}

\maketitle
\begin{abstract}
We consider the fundamental problem of balanced $k$-means clustering.
In particular, we introduce an optimal transport approach to alternating minimization called BalLOT, and we show that it delivers a fast and effective solution to this problem.
We establish this with a variety of numerical experiments before proving several theoretical guarantees. 
First, we prove that for generic data, BalLOT produces integral couplings at each step. 
Next, we perform a landscape analysis to provide theoretical guarantees for both exact and partial recoveries of planted clusters under the stochastic ball model. 
Finally, we propose initialization schemes that achieve one-step recovery of planted clusters.
\end{abstract}

\section{Introduction}

Clustering is a fundamental task in machine learning that aims to partition data points into disjoint clusters.
One of the most famous clustering problems is \textit{$k$-means clustering}, which seeks a partition that minimizes the total variance within clusters. 
Minimizing the $k$-means objective frequently results in clusters of different sizes, which is not acceptable for some use cases, where a \textit{balanced clustering} is required. 
Such use cases might involve wireless sensor networks \cite{shu2005power}, frequency-sensitive competitive learning \cite{banerjee2004frequency}, or market basket analysis \cite{ghosh2005clustering}. 
This suggests the \textbf{balanced $k$-means problem}, in which each cluster is constrained to have the same size.
Specifically, when clustering $n$ data points $\{\bm{x}_i\}_{i\in[n]}$ into $k$ clusters (for some $k$ that divides $n$), we seek to solve
\[
\begin{aligned}
     & \text{minimize} && \quad \sum_{j\in [k]}\sum_{i\in C_j} \bigg\|\bm{x}_i - \frac{1}{|C_j|} \sum_{l \in C_j}\bm{x}_l \bigg\|^2 \\[5pt]
   & \text{subject to} &&  \quad C_1\sqcup \cdots\sqcup C_k = [n], \qquad |C_j| = \frac{n}{k} \quad \forall\, j\in [k].
\end{aligned}
\]
(Here and throughout, we denote $[n]:=\{1,\ldots,n\}$.)

\subsection{Conventional approaches to balanced $k$-means clustering}

Like many other clustering problems, it is generally difficult to solve the balanced $k$-means problem.
In fact, it is known to be \textsf{NP}-hard even when $n/k = 3$ \cite{pyatkin2017nphardness}. 
Despite its non-convexity and \textsf{NP}-hardness, one might approximately solve the balanced $k$-means problem using either semidefinite programming or alternating minimization.
However, as we discuss below, many incarnations of these methods fail to deliver a scalable (approximate) solution to the balanced $k$-means problem.

For the semidefinite programming approach, Amini and Levina \cite{amini2018semidefinite} found a modification of the Peng--Wei relaxation of $k$-means \cite{peng2007approximating} that gives a semidefinite relaxation of balanced $k$-means.
In particular, letting $\bm{D}=[D_{ij}] \in \mathbb{R}^{n\times n}$ denote the matrix of squared distances, i.e., $D_{ij} = \|\bm{x}_i-\bm{x}_j\|^2$, then after introducing variables $\bm{Z}\in\mathbb{R}^{n\times n}$, we have the \textit{Amini--Levina SDP}
\[
\begin{aligned}
& \text{minimize} &&\quad \trace(\bm{D}\bm{Z}) \\
& \text{subject to} && \quad \diag(\bm{Z}) = \frac{k}{n} \cdot\bm{1}_n, \quad \bm{Z}\bm{1}_n  = \bm{1}_n, \quad \bm{Z} \succeq 0, \quad \bm{Z}\ge 0.
\end{aligned}
\]
(Here and throughout, we let $\bm{1}_n$ denote the all-ones vector in $\mathbb{R}^n$.)
Current theory for this relaxation includes a ``proximity'' sufficient condition for the relaxation to be tight~\cite{li2020when}, as well as conditions for approximate recovery of the ground truth clusters~\cite{fei2022hidden}.
Despite these theoretical guarantees, semidefinite programming is known to exhibit prohibitively long runtimes when $n$ is large~\cite{majumdar2020recent}. 

For a faster approach, one might consider alternating minimization.
For example, \textit{Lloyd's algorithm} alternates between computing centroids of proto-clusters before re-clustering to the nearest centroid.
This method is fast, but the resulting clusters are not necessarily balanced.
For a balanced alternative, one can modify the re-clustering step of Lloyd's algorithm.
For example, make $n/k$ copies of each of the current $k$ centroids, and then find a bipartite matching between the replicated centroids and the data points that minimizes the sum of squared distances; this can be accomplished using the \textit{Hungarian algorithm}~\cite{malinen2014balanced}.
Unfortunately, the Hungarian algorithm exhibits $O(n^3)$ runtime, so the per-iteration cost of this balanced alternative is prohibitively slow when $n$ is large.

\subsection{An optimal transport approach}

It turns out that optimal transport allows one to enjoy the computational advantages of alternating minimization while simultaneously respecting the requirement of balanced cluster sizes. 
The key idea is to reduce the cluster assignment step to an optimal transport linear program. 
To make this explicit, first introduce variables $\bmF=[F_{ij}]\in \mathbb{R}^{n\times k}$ and $\bmu=[\bmu_1 \cdots \bmu_k]\in \mathbb{R}^{d\times k}$.
Here, we take $F_{ij}=1/n$ if the data point index $i$ belongs to cluster $C_j$ (and otherwise zero), while $\bmu_j$ denotes the centroid of cluster~$j$.
Consider the following reformulation of the balanced $k$-means problem:
\begin{equation}
\label{problem: original_balanced}
\begin{aligned}
    & \text{minimize} 
    &&\quad  f(\bmF,\bmu) := \sum_{i\in [n]} \sum_{j\in [k]} F_{ij} \|\bm{x}_i - \bmu_j\|^2 \\[5pt]
   &\text{subject to} && \quad F_{ij} = \frac{1}{n}\cdot\bm{1}_{\{i\in C_j\}}, \qquad |C_j| = \frac{n}{k} \quad \forall \, j\in [k].    \\
\end{aligned} 
\end{equation}
(Here and throughout, we let $\bm{1}_{\{i\in S\}}\in\{0,1\}$ indicate whether $i\in S$.)
Next, we relax the discrete constraints on $\bmF$ in \eqref{problem: original_balanced} to obtain the convex polytope
\begin{align*}
\bUnk 
~ := ~ \bigg\{ ~~ \bmF\in \mathbb{R}^{n\times k}_{\geq0} ~~ : ~~ \sum_{i\in [n]} F_{ij} &= \frac{1}{k} \quad \forall\, j\in [k], 
\quad \sum_{j\in [k]}F_{ij} = \frac{1}{n} \quad \forall\, i\in [n] ~~ \bigg\}.
\end{align*}
Then we may relax \eqref{problem: original_balanced} to a biconvex minimization problem:
\begin{equation}
\label{prob:main_obj}
\text{minimize} 
\quad 
f(\bmF,\bmu)
\quad
\text{subject to} 
\quad 
\bmF \in \bUnk,
\quad
\bmu\in \Rdk.
\end{equation} 
Notably, this relaxation is tight since the extreme points of the convex polytope $\bUnk$ are precisely the balanced coupling matrices $\bmF$ in \eqref{problem: original_balanced}. 
Despite this equivalence, the relaxation~\eqref{prob:main_obj} suggests a different incarnation of the alternating minimization approach:
\begin{itemize}
\item
For a fixed $\bmu$, the optimal assignments $\bmF$ are given by the minimizer of $f(\bmF,\bmu)$ over $\bmF \in \bUnk$, which in turn constitutes a \textit{Kantorovich problem} from optimal transport.
\item
For a fixed $\bmF$, the optimal centroids $\bmu$ are given by the appropriate weighted averages $k\bm{X}\bmF$, where the data points are represented by the matrix $\bm{X}:=[\bm{x}_1\cdots\bm{x}_n]$.
\end{itemize}
We terminate this iteration once the update to $\bmu$ is small.
We refer to this approach as
\textbf{Balanced Lloyd with Optimal Transport (BalLOT)}.

We note that BalLOT is not entirely unprecedented as an approach to balanced clustering.
The idea of formulating cluster size constraints as a linear program dates back at least 25 years to~\cite{bradley2000constrained}.
Building on this idea, \cite{ZHU2010883} designed a heuristic clustering algorithm using an integer linear program to enforce desired cluster sizes. 
Similar linear programming strategies also appear in the context of political redistricting~\cite{CohenAddad2017Balanced}, where voting districts with equal-sized populations are required by law.

The computational bottleneck of each iteration of BalLOT is the Kantorovich problem.
Luckily, recent developments allow one to efficiently obtain an approximate solution to this problem. 
In particular, consider the effect of \textit{entropic regularization} on the Kantorovich problem \cite{cuturi2013sinkhorn}:
\[
\begin{aligned}
&\text{minimize} && \quad \sum_{ij}C_{ij} X_{ij} + \lambda\cdot\sum_{ij}X_{ij}(\log X_{ij} - 1) \\
&\text{subject to} && \quad \bm{X}\in \mathbb{R}^{n\times k}_{\geq0}, \qquad \bm{X}\bm{1}_k = \bm{r}, \qquad \bm{X}^T\bm{1}_n = \bm{c}.
\end{aligned}
\]
When the regularization parameter $\lambda\geq0$ is zero, this is precisely the Kantorovich problem, but when $\lambda$ is positive, one can leverage the Sinkhorn iteration to score computational speedups.
With an appropriate choice of $\lambda$, it only takes $O((\|\bm{C}\|_\infty/\varepsilon)^2\cdot kn\log n)$ operations to compute an $\varepsilon$-approximate solution to the original Kantorovich problem~\cite{dvurechensky2018computational}.
Notably, this is much faster than the $O(n^3)$ runtime of the Hungarian algorithm. 
Replacing the cluster assignment step in BalLOT with this approach results in an algorithm we call \textbf{E-BalLOT}.
(In particular, we take $\bm{r}=\frac{1}{n}\cdot\bm{1}_n$ and $\bm{c}=\frac{1}{k}\cdot\bm{1}_k$.)

\subsection{A numerical comparison between algorithms}\label{sec: numerical_comparison}

We claim that BalLOT (and its entropically regularized counterpart E-BalLOT) delivers a fast and effective approach to balanced $k$-means clustering.
To illustrate this, we compare the performance and runtime of these and other algorithms in the context of a particular random data model that has become popular for evaluating geometric clustering algorithms~\cite{nellore2015recovery,awasthi2015relax,iguchi2017probably,li2020when,fei2022hidden}:

\begin{definition}[stochastic ball model]
\label{def.sbm}
Given ball centers $\hmu_1,\ldots,\hmu_k\in \mathbb{R}^d$, consider the data points
\[
\bm{x}_i = \hmu_{\sigma(i)} + \bm{g}_i,
\qquad
i\in[n],
\]
where $\sigma\colon[n]\to [k]$ is the ground truth cluster assignment, and $\bm{g}_1,\ldots,\bm{g}_n\in\mathbb{R}^d$ are independent realizations of a random vector $\bm{g}$ with rotationally invariant distribution over the unit Euclidean ball centered at the origin.
We say the model is \textit{balanced} if the preimage of each member of $[k]$ has the same size.
\end{definition}

The stochastic ball model is designed to allow one to feasibly recover the partition of $[n]$ induced by the ground truth cluster assignment (namely, the set of fibers of $\sigma$), which we refer to as the \textit{planted clustering}.
Consider the separation parameter
\[
\Delta 
:= \min_{\substack{a,b\in [k]\\a\neq b}} \|\hmu_a - \hmu_b\|.
\]
Since each data point $\bm{x}_i$ resides in the unit ball centered at $\hmu_{\sigma(i)}$, it should be easier to recover the planted clustering when $\Delta$ is larger.
For example, once $\Delta>4$, the planted clustering can be recovered by simply thresholding all pairwise distances between data points.
Meanwhile, we cannot expect to recover the planted clustering when a data point resides in the intersection of two balls, which is possible once $\Delta<2$
(though even when $\Delta$ is smaller than $2$, the data points will typically avoid this intersection unless $n$ is exponentially large in $d$).

\begin{figure}[t]
\centering
\begin{tikzpicture}
\node[anchor=west] at (0,0) {\includegraphics[width=0.7\linewidth]{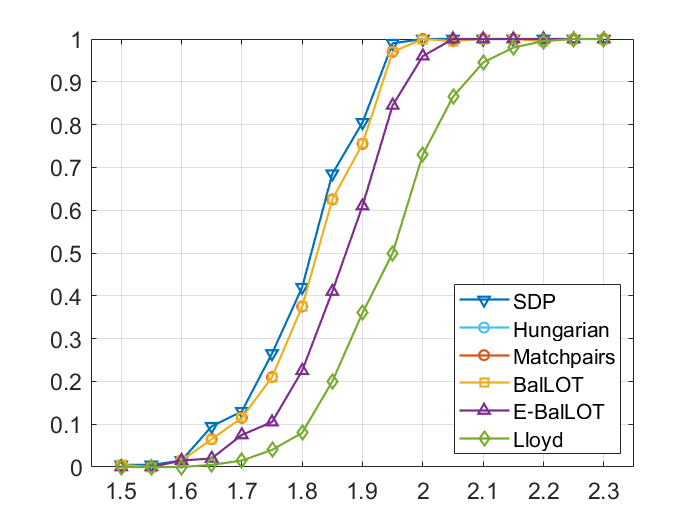}};
\node[rotate=90, anchor=center] at (0.3,0) {\small{fraction of trials with exact recovery}};
\node at (10.5,-3.6) {$\Delta$};
\end{tikzpicture}
\caption{For $100$ data points in $\mathbb{R}^2$ drawn from the balanced stochastic ball model with two clusters, and for various clustering algorithms, we plot the rate at which the planted clustering is exactly recovered as a function of the separation parameter $\Delta$. In this experiment, the Hungarian, Matchpair, and BalLOT approaches perform identically, so the light blue and red curves are covered by the orange curve. See Experiment~\ref{exp.exact recovery rate vs Delta} for details.\label{fig:recovery_fraction}}
\end{figure}

\begin{experiment}
\label{exp.exact recovery rate vs Delta}
Consider the balanced stochastic ball model arising from the uniform distribution on the unit ball in $\mathbb{R}^2$, and draw $n=100$ data points in $k=2$ planted clusters of equal size.
Let the separation parameter $\Delta$ range from $1.5$ to $2.3$ in increments of $0.05$.
For each $\Delta$, run $200$ trials of different clustering algorithms, and then record the fraction of trials that exactly recover the planted clustering.
The results can be found in Figure~\ref{fig:recovery_fraction}.
Here, ``SDP'' denotes the Amini--Levina SDP, and after solving the SDP, we round the solution to a balanced clustering using the algorithm \texttt{cluster} in~\cite{fei2022hidden}.
The other algorithms we tested are of the alternating minimization variety, and we initialized each of these with the $k$-means++ initialization~\cite{arthur2006kmeanspp}.
We applied the bipartite matching method using two different approaches for the linear assignment step, namely, using the Hungarian algorithm, as implemented by \cite{Cao2025Hungarian}, and also using MATLAB's built-in \texttt{matchpairs} function, setting $\texttt{costUnmatched}=1000$.
Apparently, the SDP performs only slightly better than the Hungarian, Matchpairs, or BalLOT methods.
In fact, these alternating algorithms perform \textit{identically} in practice since the extreme points of the BalLOT linear program are precisely the bipartite matchings.
In our implementation of E-BalLOT, we chose $\lambda := 0.05$, and our update of $\bmF$ consists of two steps:
\begin{enumerate}
\item 
apply Sinkhorn iterations by matrix scaling until $\bmF$ is nearly a member of $\bUnk$ in an entrywise $1$-norm sense, i.e., $\| \bmF\bm{1}_k - \frac{1}{n}\bm{1}_n\|_1 + \| \bmF^T \bm{1}_n - \frac{1}{k} \bm{1}_k \|_1 < \texttt{tol}:=0.01$, and 
\item round the resulting matrix $\hat{\bmF}$ to a balanced coupling $\bmF\in \bUnk$ using Algorithm~2 in~\cite{altschuler2017near}.
\end{enumerate}
We extract a clustering from E-BalLOT by assigning each row index of $\bmF$ to the corresponding row maximizer.
(We also do this for BalLOT, though such rounding is unnecessary in practice.)
Notably, the performance of E-BalLOT suffers slightly as an artifact of computing $\varepsilon$-approximate solutions to each iteration's Kantorovich problem.
Of all the algorithms we tested in this experiment, Lloyd's algorithm performed the worst.
This is to be expected since this algorithm need not produce a balanced clustering, and for this reason, we view it as a baseline of sorts.
\end{experiment}

\begin{figure}
\centering
\begin{tikzpicture}
\node[anchor=west] at (0,0) {\includegraphics[width=0.7\linewidth]{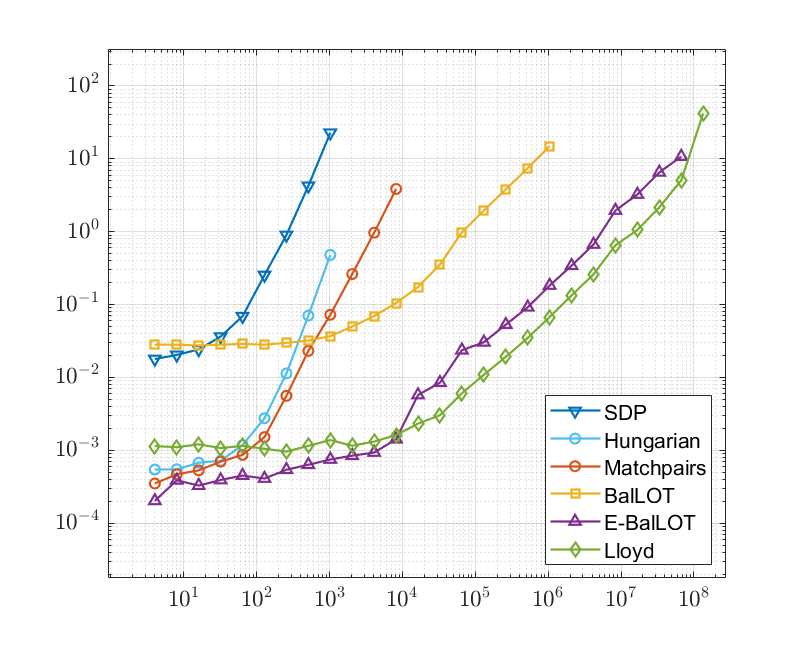}};
\node[rotate=90, anchor=center] at (0.5,0) {\small{median runtime in seconds}};
\node at (10.3, -3.75) {$n$};
\end{tikzpicture}
\caption{For each $n\in \{2^2, 2^3,\dotsc , 2^{27} \}$, we draw $n$ data points in $\mathbb{R}^2$ from the balanced stochastic ball model with two clusters, and we plot the median runtime for different clustering algorithms. BalLOT, E-BalLOT, and Lloyd's algorithm all exhibit near-linear runtimes, while the others are super-linear.\label{fig:runtime}}
\end{figure}

\begin{experiment}\label{exper.runtime comparison}
Again consider the balanced stochastic ball model arising from the uniform distribution on the unit ball in $\mathbb{R}^2$, but this time fix $\Delta=3$.
For each $n \in \{ 2^2, 2^3, \dotsc, 2^{27}\}$, run $20$ trials of the following experiment:
Draw $n$ data points from $k=2$ planted clusters of equal size, and record the runtime of different clustering algorithms.
We plot the median runtime over these $20$ trials in Figure~\ref{fig:runtime}.
We applied the same algorithms as in Experiment~\ref{exp.exact recovery rate vs Delta} with the same implementations, and for each algorithm, we recorded the median runtime until it exceeded $10$ seconds (with two exceptions, since we quickly encountered extraordinarily long runtimes with the Hungarian and Matchpairs algorithms.)
Of all of these algorithms, BalLOT, E-BalLOT, and Lloyd are the most scalable, with the median runtime exhibiting near-linear growth in $n$.
Meanwhile, the SDP and bipartite matching runtimes explode super-linearly.
\end{experiment}

We conclude this section by illustrating that BalLOT's impressive performance is not a mere artifact of data being drawn from well-separated balls.
Indeed, the following experiment shows that BalLOT and E-BalLOT do a \textit{much} better job of estimating balanced Gaussian mixture models than traditional (unbalanced) $k$-means clustering, especially when the Gaussians exhibit substantial overlap.

\begin{experiment}
\label{exper.gmm}
Fix $d=2$, $n=2000$, and $k=5$.
Draw means $\hmu_1,\ldots, \hmu_k\in\mathbb{R}^d$ with iid $N(0,25)$ coordinates, draw displacements $\bm{g}_1,\ldots,\bm{g}_n\in\mathbb{R}^d$ with iid $N(0,1)$ coordinates, fix a balanced assignment $\sigma\colon[n]\to[k]$, and then put $\bm{x}_i=\hmu_{\sigma(i)}+\bm{g}_i$.
Generate data in this way $50$ different times, and for each realization, compute $10$ independent runs of the $k$-means++ initialization to seed BalLOT, E-BalLOT, and Lloyd's algorithm.
See Figure~\ref{fig:gmm}(left) for an example of the resulting cluster centroids.
For each run, compute the $2$-Wasserstein distance between the cluster centroids and $\{\hmu_1,\ldots, \hmu_k\}$.
The results of these $50\times 10$ runs are summarized in box plots in Figure~\ref{fig:gmm}(right).
Apparently, Lloyd's algorithm is much more varied in its performance, presumably because the $k$-means++ initialization sometimes leads it astray by double-sampling a single Gaussian.
Meanwhile, BalLOT and E-BalLOT are less impressionable by bad initialization thanks to their pursuit of a balanced clustering.
\end{experiment}

\begin{figure}
\centering 
\includegraphics[trim={2.3cm 0.3cm 2cm 0},clip,width=0.45\textwidth]{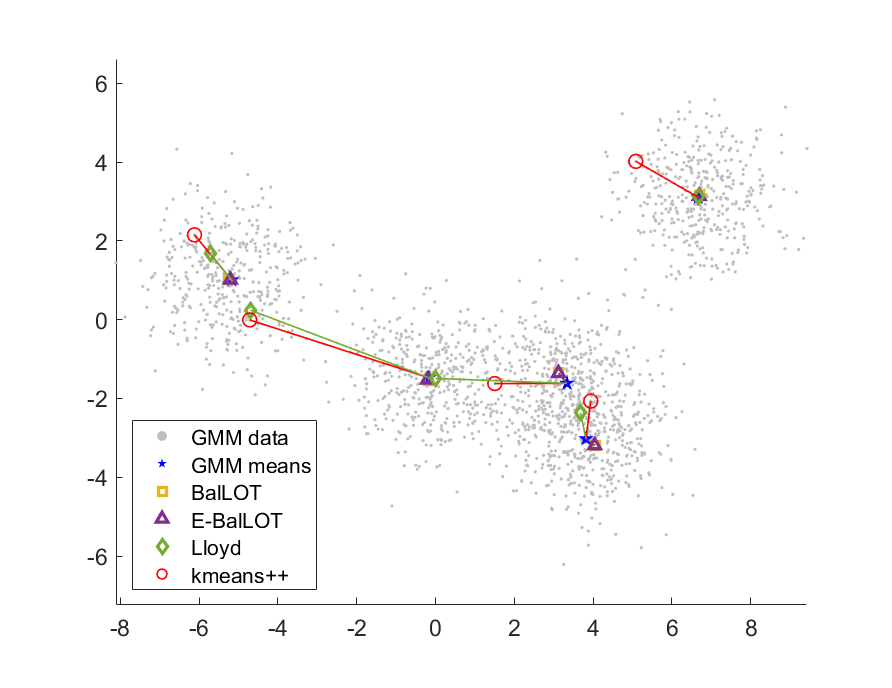}
\qquad
\includegraphics[trim={1cm 0 1.5cm 0},clip,width=0.45\textwidth]{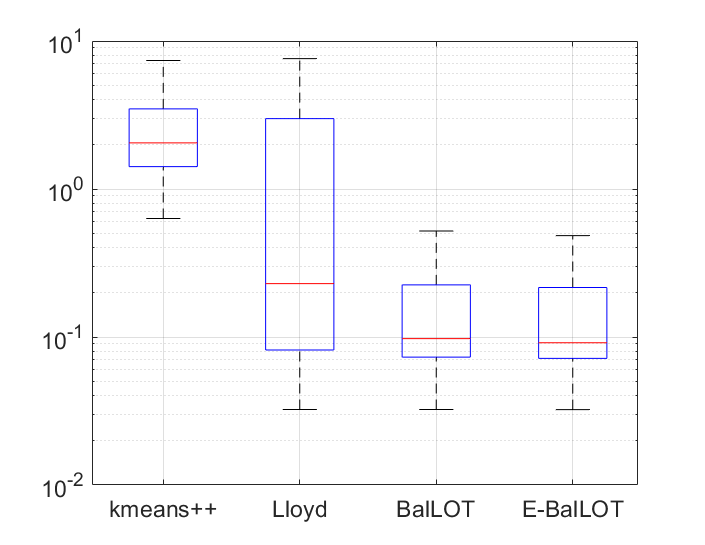}
\caption{Estimating a balanced Gaussian mixture model with cluster centroids. We ran the $k$-means++ initialization as a seed for BalLOT, E-BalLOT, and Lloyd's algorithm. For one run of this experiment, the resulting cluster centroids are displayed on the left, along with line segments that illustrate an optimal-transport correspondence with the ground truth means. On the right, we display box plots of the $2$-Wasserstein distances that result from several trials. See Experiment~\ref{exper.gmm} for details.\label{fig:gmm}}
\end{figure}

\subsection{Related work and roadmap}

Our theoretical contributions (discussed in the next section) follow a growing line of work that uses random data models to evaluate the performance of clustering algorithms.
In addition to the stochastic ball model (SBM) given in Definition~\ref{def.sbm}, researchers have considered data drawn from Gaussian mixture models (GMMs), as well as the more general notion of sub-Gaussian mixture models (SGMMs).
The following table summarizes this literature:
\begin{center}
\begin{tabular}{l|lll}
& GMM & SGMM & SBM \\ \hline
Lloyd's algorithm & \cite{awasthi2018clustering} & \cite{lu2016statistical} & $-$ \\
$k$-means LP & $-$ & $-$ & \cite{awasthi2015relax,DeRosaKhajavirad2022RatioCut}\\
$k$-median LP & $-$ & $-$ & \cite{nellore2015recovery,awasthi2015relax,DelPiaMa2023KMedian} \\
Peng--Wei SDP & \cite{li2020when} & \cite{mixon2017clustering} & \cite{awasthi2015relax,iguchi2017probably,li2020when} \\
Amini--Levina SDP & $-$ & \cite{fei2022hidden} & \cite{fei2022hidden}\\
BalLOT & $-$ & $-$ & this paper
\end{tabular}
\end{center}
Notably, the Amini--Levina SDP and BalLOT are the only balanced clustering algorithms listed above.

In this paper, we analyze the landscape of the BalLOT objective; see the next section for a detailed summary of our results.
First, we show that under general balanced mixture models, replacing the objective in \eqref{prob:main_obj} with $\mathbb{E}f(\bmF,\bmu)$ results in a problem with no spurious local minimizers.
We interpret this as establishing a benign landscape in the infinite-sample regime.
For the finite-sample regime, we report deterministic guarantees as well as probabilistic guarantees in terms of the balanced stochastic ball model.
In particular, we estimate the size of the planted clustering's basin of attraction, we identify different initializations that reside in this basin of attraction, and we present numerical experiments that evaluate our landscape bounds.
The proofs of our main results are given in Section~\ref{sec: formal statements}, and we conclude with a discussion in Section~\ref{sec.discussion}.

\section{Main results}\label{sec: main results}

We first establish that BalLOT is well defined in some sense.
In particular, we claim that under weak conditions, BalLOT delivers integral couplings at each step.
This is perhaps not surprising considering Figure~\ref{fig:recovery_fraction}, in which all three of the alternating minimization algorithms perform identically.
It turns out that BalLOT returns integral couplings when the data is \textit{generic}, that is, when the data avoids a particular low-dimensional algebraic set.
For example, any data that is drawn from a continuous distribution qualifies as generic in this sense, at least almost surely.

\begin{theorem}
\label{thm.BalLOT is well defined}
For generic data $\bm{X}$, if the columns of the initialization $\bm{\mu}^0$ are distinct columns of $\bm{X}$ (of if they are generic members of $\mathbb{R}^d$), then for each BalLOT iteration $t=0,1,\ldots$, the minimizer of $f(\bmF,\bmu^t)$ subject to $\bmF\in \bUnk$ is unique and integral.
\end{theorem}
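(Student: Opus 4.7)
My plan is to reduce the claim to a genericity statement about transportation LPs and then propagate it through the iterations. For any fixed $\bmu$, the step $\min_{\bmF \in \bUnk} f(\bmF, \bmu)$ is a transportation linear program with cost matrix $C_{ij} = \|\bm{x}_i - \bmu_j\|^2$. Its feasible region $\bUnk$ is a scaled transportation polytope whose vertices are precisely the integral couplings corresponding to balanced partitions $C_1 \sqcup \cdots \sqcup C_k = [n]$ with $|C_j| = n/k$. Since the objective is linear and $\bUnk$ is compact, the LP has a unique minimizer if and only if a single vertex achieves the minimum, in which case the minimizer is automatically that integral vertex. So it suffices to show that, for generic $\bm{X}$, no LP encountered during BalLOT admits a tie between distinct vertices.

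I would prove this by an induction that tracks the algebraic structure of the iterates. Suppose inductively that the LPs through iteration $t-1$ have each had a unique integral minimizer, yielding $\bmF^{(1)}, \ldots, \bmF^{(t-1)}$ associated to a combinatorial history $\mathcal{H} = (C^{(1)}, \ldots, C^{(t-1)})$. Then $\bmu^{(t-1)} = k\bm{X}\bmF^{(t-1)}$ is a linear function of $\bm{X}$ whose coefficients depend only on $\mathcal{H}$, and hence the cost of any fixed balanced partition $C$ at iteration $t$ is a polynomial in $\bm{X}$. The equation ``two distinct partitions $C \neq C'$ yield equal cost at iteration $t$ under history $\mathcal{H}$'' thus cuts out an algebraic subset of $\mathbb{R}^{d\times n}$. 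Moreover, whenever the unique LP minimizer differs from the current $\bmF$, the objective $f$ strictly decreases, while otherwise BalLOT has reached a fixed point; since $\bUnk$ has only finitely many vertices, only finitely many histories $\mathcal{H}$ can arise before termination.

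I would then define the bad set $B \subseteq \mathbb{R}^{d\times n}$ as the union---over all finitely many histories $\mathcal{H}$ and all pairs of distinct balanced partitions $C \neq C'$---of the algebraic sets cut out above. This $B$ is a finite union of algebraic sets, hence either all of $\mathbb{R}^{d\times n}$ or a proper algebraic subset of measure zero. For every $\bm{X} \notin B$, the induction closes and each BalLOT iteration has a unique integral minimizer.

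The main obstacle is ruling out the degenerate possibility $B = \mathbb{R}^{d\times n}$; for this I need to exhibit a single $\bm{X} \notin B$. I would use an extremely well-separated configuration: $k$ clusters of $n/k$ points each, tightly packed around $k$ widely spaced centers (a stochastic ball realization with enormous $\Delta$). With $\bmu^0$ equal to $k$ distinct columns, one from each cluster, one checks directly that the planted partition is the strict optimum of every LP encountered, so BalLOT produces no ties. This shows $B$ is a proper algebraic subset, its complement is Zariski-open and dense, and BalLOT produces a unique integral coupling at every step for $\bm{X}$ in that complement. The variant with $\bmu^0$ a generic point of $\mathbb{R}^{d\times k}$ independent of $\bm{X}$ follows by the same argument, now taking $(\bm{X}, \bmu^0)$ as the free variable.
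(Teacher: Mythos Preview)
Your argument has a genuine gap at the step where you rule out $B = \mathbb{R}^{d\times n}$. Some of the tie polynomials are \emph{identically} zero, so their zero locus is all of $\mathbb{R}^{d\times n}$, and hence $B = \mathbb{R}^{d\times n}$ no matter what witness you propose. Concretely, take $n=6$, $k=3$, $\bmu^0 = (\bm{x}_1,\bm{x}_2,\bm{x}_3)$, and let $C$ assign indices $1,2,3$ to clusters $2,3,1$ while $C'$ assigns them to clusters $3,1,2$, with $4,5,6$ assigned identically in both. Then $f(C,\bmu^0)-f(C',\bmu^0)$ reduces to $\|\bm{x}_1-\bm{x}_2\|^2+\|\bm{x}_2-\bm{x}_3\|^2+\|\bm{x}_3-\bm{x}_1\|^2$ minus those same three terms, hence vanishes for \emph{every} $\bm{X}$, including your well-separated one. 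More broadly, your witness only certifies that no ties occur along the single history it actually follows, whereas membership in your $B$ is a condition quantified over \emph{all} histories and \emph{all} pairs $(C,C')$; the identically-vanishing pairs force $B$ to be everything.

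The paper does not try to avoid all ties. Instead it proves that, for generic $\bm{X}$, any two distinct integral couplings that tie are \emph{both strictly suboptimal}, so the optimum remains unique. The mechanism: if $\bmF^1\neq\bmF^2$ tie at a generic $\bm{X}$, the tie polynomial must vanish identically (else generic $\bm{X}$ would avoid its zero set); a term-by-term analysis of what $p\equiv 0$ forces on the two assignment maps then shows they are related by inverse derangements on a subset of the $k$ initialization indices (or by inverse cluster-permutations when $\bmu^0$ consists of partition centroids), and replacing that derangement by the identity strictly lowers the cost. This is established once for $\bmu^0$ consisting of $k$ data columns and once for $\bmu^0$ consisting of the centroids of an arbitrary balanced partition; since every $\bmu^t$ with $t\ge 1$ is of the latter form and there are only finitely many balanced partitions, these two lemmas together cover every iteration.
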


(See Section~\ref{sec: well-defineness} for a proof of Theorem~\ref{thm.BalLOT is well defined}.)

Next, to obtain performance guarantees for BalLOT, we wish to characterize the optimization landscape of problem~\eqref{prob:main_obj}.
As a point of comparison, when minimizing a convex function over a convex feasibility region, any local minimizer must also be a global minimizer. 
Meanwhile, for the Kantorovich problem~\eqref{prob:main_obj}, while the feasibility region $\bUnk\times \Rdk$ is convex, the objective function $f(\cdot,\cdot)$ is not convex, but \textit{biconvex}.
More specifically,  $f(\cdot,\bmu)$ is linear and $f(\bmF,\cdot)$ is strongly convex.
As such, we cannot establish a benign optimization landscape from a standard convex analysis.

Despite this lack of convexity, Figure~\ref{fig:recovery_fraction} suggests that under the balanced stochastic ball model, BalLOT typically identifies the planted clustering about as well as its convex counterpart, the Amini--Levina SDP.
In fact, BalLOT frequently terminates after two or three iterations. 
These observations suggest that BalLOT enjoys a favorable optimization landscape when the data is drawn from the stochastic ball model, and our first result corroborates this to some extent.

\begin{theorem}
\label{thm: informal_average_landscape}
Under any balanced mixture model, if $\Delta>0$, then every local minimizer of $\mathbb{E}f(\bmF,\bmu)$ subject to $\bmF\in\bUnk$ and $\bmu\in\Rdk$ is necessarily a global minimizer.  
\end{theorem}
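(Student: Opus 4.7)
The plan is a two-step reduction that brings the problem to a concave minimization on the $k\times k$ doubly stochastic polytope $\bUkk$, where a Birkhoff-style perturbation argument shows that local minimizers must be extreme points (hence global).

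Writing $\bm{x}_i = \hmu_{\sigma(i)} + \bm{g}_i$ with $\mathbb{E}\bm{g}_i = \bm{0}$ and using $\sum_{i,j}F_{ij}=1$ on $\bUnk$, the expectation reduces, up to an additive constant, to $G(\bmF,\bmu) := \sum_{i,j}F_{ij}\|\hmu_{\sigma(i)}-\bmu_j\|^2$. Observing that $G$ depends on $\bmF$ only through the cluster-wise marginal $\pi_{cj} := \sum_{i:\sigma(i)=c}F_{ij}$, one can rewrite $G(\bmF,\bmu) = \tilde G(\bm{\pi},\bmu) := \sum_{c,j}\pi_{cj}\|\hmu_c-\bmu_j\|^2$, where $\bm{\pi}:=\Phi(\bmF)\in\bUkk$. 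The map $\Phi:\bUnk\to\bUkk$ is a surjective linear map between polytopes and admits a local continuous section, so any local minimizer $(\bmF^\star,\bmu^\star)$ of $G$ on $\bUnk\times\Rdk$ pushes forward to a local minimizer $(\bm{\pi}^\star,\bmu^\star):=(\Phi(\bmF^\star),\bmu^\star)$ of $\tilde G$ on $\bUkk\times\Rdk$. Profiling out $\bmu$ via its first-order condition $\bmu^\star(\bm{\pi})=k\bm{M}\bm{\pi}$ (with $\bm{M}:=[\hmu_1\ \cdots\ \hmu_k]$) yields
\[
\bar G(\bm{\pi}) := \min_{\bmu\in\Rdk}\tilde G(\bm{\pi},\bmu) = \tfrac{1}{k}\|\bm{M}\|_F^2 - k\|\bm{M}\bm{\pi}\|_F^2,
\]
which is concave in $\bm{\pi}$; continuity of $\bm{\pi}\mapsto\bmu^\star(\bm{\pi})$ then ensures $\bm{\pi}^\star$ is a local minimizer of $\bar G$ on $\bUkk$.

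I then claim $\bm{\pi}^\star$ must be a scaled permutation matrix, at which $\bar G(\bm{\pi}^\star)=0$ realizes the global minimum. Suppose not; by Birkhoff's theorem, $\bm{\pi}^\star=\sum_i\lambda_i(1/k)\bm{P}_{\tau_i}$ with positive weights $\lambda_1,\lambda_2>0$ on distinct permutations $\tau_1\neq\tau_2$. Set $\bm{Q}:=(1/k)(\bm{P}_{\tau_1}-\bm{P}_{\tau_2})$, so that $\bm{\pi}^\star\pm t\bm{Q}\in\bUkk$ for all small $t>0$. The parallelogram identity applied to $\bm{M}(\bm{\pi}^\star\pm t\bm{Q})$ gives
\[
\bar G(\bm{\pi}^\star+t\bm{Q}) + \bar G(\bm{\pi}^\star-t\bm{Q}) = 2\bar G(\bm{\pi}^\star) - 2kt^2\|\bm{M}\bm{Q}\|_F^2.
\]
The hypothesis $\Delta>0$ guarantees the columns of $\bm{M}$ are distinct, so some column $\hmu_{\tau_1(j)}-\hmu_{\tau_2(j)}$ of $\bm{M}\bm{Q}$ is nonzero; hence at least one of $\bar G(\bm{\pi}^\star\pm t\bm{Q})$ is strictly less than $\bar G(\bm{\pi}^\star)$, contradicting local minimality.

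The main obstacle is this parallelogram-identity perturbation, which is precisely where the hypothesis $\Delta>0$ enters essentially; the other steps---cluster-wise marginalization, profiling out $\bmu$, and the local lifting of $\Phi$---are standard manipulations.
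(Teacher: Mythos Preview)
Your proof is correct, and it shares the paper's first stage---reducing from $\bUnk\times\Rdk$ to $\bUkk\times\Rdk$ via the cluster-wise marginal map---but diverges from the paper in the endgame. The paper keeps the joint function $h(\bmpi,\bmu)=\sum_{p,j}\pi_{pj}\|\bmu_j-\hmu_p\|^2$ and argues as follows: since $h(\cdot,\bmu^0)$ is linear, it is constant over the Birkhoff support of $\bmpi^0$, so a nearby point $\bmpi^\varepsilon$ along an edge toward a supporting permutation is also locally optimal; strong convexity in $\bmu$ then forces $\bmu^0=k\hmu\bmpi^0=k\hmu\bmpi^\varepsilon$, whence $\bmu^0$ is a column permutation of $\hmu$ and $\bmpi^0$ is the unique (scaled) permutation achieving $h=0$. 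You instead profile out $\bmu$ to obtain the explicit concave function $\bar G(\bmpi)=\tfrac{1}{k}\|\bm{M}\|_F^2-k\|\bm{M}\bmpi\|_F^2$ and use the parallelogram identity to show that any non-extreme $\bmpi^\star$ admits a feasible direction $\bm{Q}$ with $\bm{M}\bm{Q}\neq0$ (precisely where $\Delta>0$ enters), giving a strict descent. Your route is arguably more transparent: it reduces the problem to the standard fact that a concave function that is strictly concave along all feasible face directions has its local minima only at extreme points. The paper's route avoids the explicit computation of $\bar G$ and instead leverages the bilinear structure directly; it is slightly less computational but relies on a more delicate two-point local-optimality argument. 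Both proofs invoke Birkhoff--von~Neumann and use $\Delta>0$ at exactly one place.
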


(See Section~\ref{sec: average_landscape} for a proof of Theorem~\ref{thm: informal_average_landscape}.)

Here, a \textit{mixture model} is a vast generalization of stochastic ball model in which the $\bm{g}_i$'s need only be iid with mean zero (e.g., Theorem~\ref{thm: informal_average_landscape} also holds for Gaussian mixture models).
We interpret this result as a \textit{global landscape characterization} for the infinite-sample setting. 
Indeed, when the number of data points is large, then by the law of large numbers, we expect the landscape of $f(\bmF,\bmu)$ to approach the (benign) population landscape $\mathbb{E}f(\bmF,\bmu)$.
Of course, in practice, BalLOT only operates on finitely many data points, and Theorem~\ref{thm: informal_average_landscape} doesn't directly transfer to the finite-sample setting. 
To close this theory gap, we also conduct a local landscape analysis to establish when BalLOT recovers the planted clustering. 
Like other alternating minimization algorithms, the outcome of BalLOT depends on its initialization.
Our second result gives a deterministic condition under which initialization is successful.
First, we introduce some useful nomenclature:

\begin{definition}
If the first update $\bmF^1$ of BalLOT recovers to the planted clustering, we say that BalLOT achieves \textbf{one-step recovery}.
\end{definition}

We note that one-step recovery implies that $\bmu^1$ corresponds to the planted cluster centroids.
If in addition every data point is closer to its planted cluster centroid than any other centroid, then $\bmF^2=\bmF^1$, at which point BalLOT terminates.

\begin{theorem}
\label{thm:informal_basin_of_attraction}
Consider data points that enjoy a balanced clustering into unit balls whose centers have minimum pairwise distance $\Delta>2$.
For any initialization that is uniformly within $\frac{\Delta}{2}-1$ of these centers, BalLOT achieves one-step recovery.
(The threshold $\frac{\Delta}{2}-1$ can be increased to $((\frac{\Delta}{2})^2-1)^{1/2}$ when $k=2$.)
\end{theorem}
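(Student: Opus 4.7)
After permuting columns of $\bmu^0$, we may assume $\|\bmu^0_j - \hmu_j\| \le \rho$ for each $j\in[k]$, where $\rho := \tfrac{\Delta}{2} - 1$ in general and $\rho := \sqrt{(\Delta/2)^2 - 1}$ when $k=2$. Let $\bmF^\star \in \bUnk$ denote the indicator of the planted clustering, $F^\star_{ij} = \tfrac{1}{n}\bone_{\{\sigma(i)=j\}}$. Since by Theorem~\ref{thm.BalLOT is well defined} the minimizer of $f(\cdot,\bmu^0)$ over $\bUnk$ is unique for generic data, it suffices to show that $\bmF^\star$ achieves this minimum.

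For general $k$, the plan is to verify a pointwise nearest-centroid inequality by two triangle inequalities: for every $i$ with $\sigma(i) = a$ and every $b \in [k]$,
\[
\|\bm{x}_i - \bmu^0_a\| \le \|\bm{x}_i - \hmu_a\| + \|\hmu_a - \bmu^0_a\| \le 1+\rho = \tfrac{\Delta}{2} \le \Delta - 1 - \rho \le \|\bm{x}_i - \bmu^0_b\|,
\]
where the final step uses $\|\bm{x}_i - \bmu^0_b\| \ge \|\hmu_a - \hmu_b\| - \|\bm{x}_i - \hmu_a\| - \|\hmu_b - \bmu^0_b\|$. Combined with the row-sum constraint $\sum_j F_{ij} = 1/n$, any $\bmF \in \bUnk$ then satisfies
\[
f(\bmF,\bmu^0) = \sum_{i,j} F_{ij}\|\bm{x}_i - \bmu^0_j\|^2 \;\ge\; \sum_i \|\bm{x}_i - \bmu^0_{\sigma(i)}\|^2 \sum_j F_{ij} \;=\; f(\bmF^\star,\bmu^0),
\]
so $\bmF^\star$ is optimal.

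For $k=2$, I will exploit the balance constraint directly. Setting $c_i := \langle \bm{x}_i, \bmu^0_2 - \bmu^0_1\rangle$ and expanding $\|\bm{x}_i - \bmu^0_j\|^2$ for $j=1,2$, the column-sum constraints $\sum_i F_{i1} = \sum_i F_{i2} = 1/2$ reduce the OT problem to: minimize $\sum_i F_{i1} c_i$ over $F_{i1}\in[0,1/n]$ with $\sum_i F_{i1} = 1/2$. The unique optimum sets $F_{i1} = 1/n$ on the $n/2$ indices with smallest $c_i$, so one-step recovery reduces to $\max_{\sigma(i)=1} c_i < \min_{\sigma(i)=2} c_i$. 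Using $\|\bm{x}_i - \hmu_{\sigma(i)}\| \le 1$, this follows from $\langle \hmu_2 - \hmu_1, \bm{v}\rangle > 2$, where $\bm{v} := (\bmu^0_2 - \bmu^0_1)/\|\bmu^0_2 - \bmu^0_1\|$. Writing $\bmu^0_2 - \bmu^0_1 = (\hmu_2 - \hmu_1) + \bm{w}$ with $\|\bm{w}\| \le 2\rho$, a plane-geometry argument (the worst case is the tangent line from the origin to the $2\rho$-ball about $\hmu_2 - \hmu_1$) yields $\sin\theta \le 2\rho/\Delta$, where $\theta$ is the angle between $\bm{v}$ and $\hmu_2 - \hmu_1$. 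Therefore $\langle \hmu_2 - \hmu_1, \bm{v}\rangle = \Delta \cos\theta \ge \Delta\sqrt{1 - 4\rho^2/\Delta^2}$, which exceeds $2$ precisely when $\rho < \sqrt{(\Delta/2)^2 - 1}$.

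The general-$k$ step is essentially mechanical; the real work is the $k=2$ refinement. The critical observation is that balance converts the assignment step into a \emph{median-projection} split rather than a Voronoi midpoint split, which is what buys the larger tolerance. Establishing the sharp constant then reduces to identifying the worst-case perturbation, which lies on the boundary of the $2\rho$-ball perpendicular to $\hmu_2 - \hmu_1$.
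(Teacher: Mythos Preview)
Your proposal is correct, and for $k=2$ it matches the paper's approach closely: both reduce one-step recovery to the condition $\Delta\cos\theta>2$ (the paper isolates this as Lemma~\ref{lem.delta cos theta}) and then verify it via the same tangent-line geometry. Your sorting/median-split reformulation is equivalent to the paper's pairwise-swap argument.

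For general $k$, however, your route is genuinely different and considerably simpler. You observe that $\rho<\tfrac{\Delta}{2}-1$ forces each $\bm{x}_i$ to be strictly closer to $\bmu^0_{\sigma(i)}$ than to any other $\bmu^0_b$, so the \emph{unconstrained} nearest-centroid assignment already coincides with the planted (balanced) clustering; since this is feasible for the balanced problem, it is optimal there too. The paper instead argues by contradiction via optimal-transport structure: a hypothetical misclustering is encoded as an Eulerian digraph on $[k]$, decomposed into cycles, and then cyclical monotonicity along one cycle yields an index $i^\star$ for which a scalar inequality must hold, which is shown to contradict $\delta<\tfrac{\Delta}{2}-1$ after some nontrivial estimates. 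For the threshold $\tfrac{\Delta}{2}-1$ both arguments land in the same place, so yours is more economical here. What the paper's machinery buys is reusability: the Eulerian-digraph/cyclical-monotonicity framework is precisely what drives the later misclustering-rate bounds (Theorems~\ref{thm.k=2 misclustering rate} and~\ref{thm:misclutering ratio}), where the pointwise nearest-centroid inequality can fail on some points and one must control \emph{how many}.

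One small correction: your appeal to Theorem~\ref{thm.BalLOT is well defined} for uniqueness is misplaced, since that result assumes generic data and that $\bmu^0$ consists of data columns or is itself generic, neither of which is hypothesized in Theorem~\ref{thm:informal_basin_of_attraction}. You do not need it: with $\rho$ strictly below the threshold (the paper also takes the hypothesis as $\delta<\tfrac{\Delta}{2}-1$), your pointwise inequalities are strict, which already forces $\bmF^\star$ to be the \emph{unique} minimizer over $\bUnk$.
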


(See Section~\ref{sec: basin of attraction} for a proof of Theorem~\ref{thm:informal_basin_of_attraction}.)

We interpret this result as a \textit{local landscape characterization} for the finite-sample setting.
Indeed, Theorem~\ref{thm:informal_basin_of_attraction} gives that well-separated clusters enjoy a large basin of attraction.
In what follows, we evaluate the thresholds in Theorem~\ref{thm:informal_basin_of_attraction} with numerical experiments.

\begin{experiment}
\label{exper.k=2}
First, we test the $k=2$ case of Theorem~\ref{thm:informal_basin_of_attraction}.
For each $\Delta\in\{2.1,2.2,\ldots,3.0\}$ and each $\delta\in\{0.1,0.2,\ldots,1.0\}$, we conduct $200$ trials of the following experiment: 
Draw $n=100$ points from the balanced stochastic ball model with $\bm{g}$ having uniform distribution on the unit circle, take the initializations $\bmu_1^0$ and $\bmu_2^0$ to be random $\delta$-perturbations of $\bmu_1^\natural$ and $\bmu_2^\natural$, respectively, and then run BalLOT with this initialization.
Figure~\ref{fig:heat_map_2_means}(left) illustrates the proportion of these $20$ trials for which $\bmF^1$ exactly recovered the planted clustering.
For comparison, we also plot the threshold from Theorem~\ref{thm:informal_basin_of_attraction} in red.
Theorem~\ref{thm:informal_basin_of_attraction} implies that every pixel below the red curve must be white, and the fact that the pixels above the red curve are not white indicates that this threshold is sharp.
\end{experiment}

\begin{figure}
\centering 
\begin{tikzpicture}
\node[anchor=west] at (0,0) {\includegraphics[trim={3cm 0 3cm 0},clip,width=2.5in]{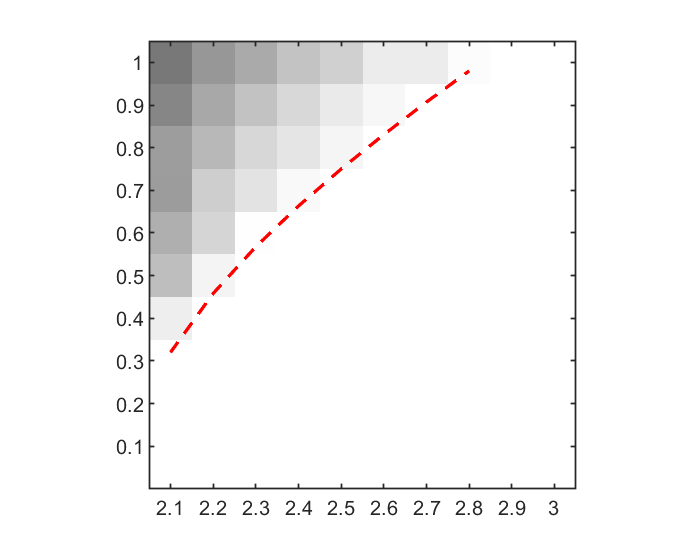}};
\node[rotate=90, anchor=center] at (-0.3,0) {\small{initial distance from true means}};
\node at (6.6, -3.15) {$\Delta$};
\node at (5.2, 2) {\textcolor{red}{$\sqrt{(\frac{\Delta}{2})^2-1}$}};
\end{tikzpicture}
\qquad
\begin{tikzpicture}
\node[anchor=west] at (0,0) {\includegraphics[trim={3cm 0 3cm 0},clip,width=2.5in]{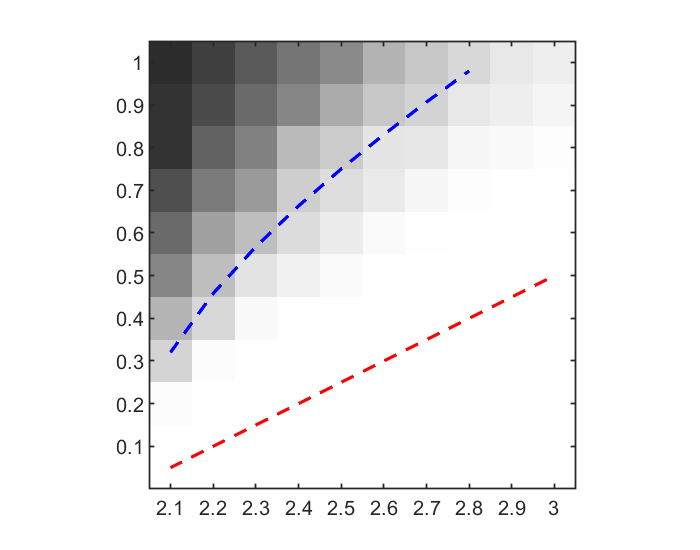}};
\node[rotate=90, anchor=center] at (-0.3,0) {\small{initial distance from true means}};
\node at (6.6, -3.15) {$\Delta$};
\node at (5.2, 2) {\textcolor{blue}{$\sqrt{(\frac{\Delta}{2})^2-1}$}};
\node at (5.4, -1) {\textcolor{red}{$\frac{\Delta}{2}-1$}};
\end{tikzpicture}
\caption{Probability of BalLOT exactly recovering the planted clustering of a balanced stochastic ball model. White denotes probability $1$, and black probability $0$. The $k=2$ case is given on the left, while the $k=3$ case is on the right. For comparison, we plot the threshold given in Theorem~\ref{thm:informal_basin_of_attraction}. See Experiments~\ref{exper.k=2} and~\ref{exper.k=3} for details.
\label{fig:heat_map_2_means}}
\end{figure}

\begin{experiment}
\label{exper.k=3}
Next, we test the $k>2$ case of Theorem~\ref{thm:informal_basin_of_attraction} by taking $k=3$.
We perform the same experiment as before, but with a total of $n=300$ points, and with ball centers that form the vertices of an equilateral triangle with side length $\Delta$.
Figure~\ref{fig:heat_map_2_means}(right) illustrates the proportion of trials for which $\bmF^1$ exactly recovered the planted clustering.
For comparison, we plot the threshold from Theorem~\ref{thm:informal_basin_of_attraction} in red.
Again, Theorem~\ref{thm:informal_basin_of_attraction} implies that every pixel below the red curve must be white, but this time, some of the pixels above the red curve are also white, indicating that this threshold is not sharp.
We also plot the $k=2$ threshold in blue, but the gray pixels below this curve establish that this isn't the correct threshold either.
\end{experiment}

While Theorem~\ref{thm:informal_basin_of_attraction} gives that BalLOT recovers the planted clustering provided the initialization resides in a basin of attraction, the next result gives that BalLOT delivers a decent clustering even when the conditions in Theorem~\ref{thm:informal_basin_of_attraction} are violated, at least in the $k=2$ case.

\begin{theorem}
\label{thm.k=2 misclustering rate}
Fix $k=2$ ball centers $\hmu_1,\hmu_2\in \mathbb{R}^d$, as well as a BalLOT initialization $\bmu^0_1,\bmu^0_2\in\mathbb{R}^d$.
Denote the planted distance and the initialization's cosine similarity by
\[
\Delta:=\|\hmu_1-\hmu_2\|,
\qquad
\cos\theta:=\bigg|\bigg\langle \frac{\bmu^0_1-\bmu^0_2}{\|\bmu^0_1-\bmu^0_2\|},\frac{\hmu_1-\hmu_2}{\|\hmu_1-\hmu_2\|}\bigg\rangle\bigg|,
\]
and consider data points drawn from the stochastic ball model.
\begin{itemize}
\item[(a)]
If $\Delta\cos\theta\geq2$, then almost surely, BalLOT achieves one-step recovery.
\item[(b)]
If $\Delta\cos\theta<2$, then with probability $\geq1-\varepsilon$, the first BalLOT update satisfies
\[
\min_{\pi\in S_2} \frac{| \{i\in [n]: \sigma^1(i) \neq \pi(\sigma(i)) \} |}{n} \le \sqrt{\exp\left(-\frac{d-1}{4}\cdot(\Delta\cos\theta)^2\right) + \sqrt{\frac{1}{n}\log\left(\frac{n}{2\varepsilon}\right)}}.
\]
Here, $\sigma\colon[n]\to[2]$ denotes the planted clustering assignment, while $\sigma^1\colon[n]\to[2]$ denotes the assignment determined by the first BalLOT update $\bmF^1$.
\end{itemize}
\end{theorem}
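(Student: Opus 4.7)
\medskip\noindent\emph{Proof plan.} Both parts reduce to a one-dimensional analysis along the direction $\bm{v} := (\bmu_1^0 - \bmu_2^0)/\|\bmu_1^0 - \bmu_2^0\|$. Expanding the squared-distance cost and using $F_{i1} + F_{i2} = 1/n$, the Kantorovich problem defining the first BalLOT update reduces to maximizing $\sum_i F_{i1} W_i$ subject to $F_{i1} \in [0, 1/n]$ and $\sum_i F_{i1} = 1/2$, where $W_i := \langle \bm{x}_i, \bm{v}\rangle$. The unique extreme-point maximizer places the $n/2$ largest $W_i$'s into one cluster. Choosing the permutation $\pi \in S_2$ so that cluster~$1$ corresponds to this top half, and writing $\bm{x}_i = \hmu_{\sigma(i)} + \bm{g}_i$, we have $W_i = \epsilon_{\sigma(i)}\, a + Y_i + c_0$, where $a := \tfrac{\Delta}{2}\cos\theta$, $Y_i := \langle \bm{g}_i, \bm{v}\rangle$, $\epsilon_1 = +1$, $\epsilon_2 = -1$, and $c_0$ is an additive constant that does not influence the sort order.

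For part~(a), suppose $\Delta\cos\theta \geq 2$, so $a \geq 1$. Since $|Y_i| \leq \|\bm{g}_i\| \leq 1$, the projections from cluster~$1$ lie in $c_0 + [a-1, a+1]$ and those from cluster~$2$ in $c_0 + [-a-1, -a+1]$. These intervals are disjoint except possibly at a single point, and continuity of the distribution of $\bm{g}$ ensures that no $W_i$ lands on this boundary almost surely. The sort-and-split therefore returns the planted clustering exactly, which is one-step recovery.

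For part~(b), assume $\Delta\cos\theta < 2$ and introduce the risky counts
\[
T_1 := |\{\, i : \sigma(i) = 1,\ Y_i < -a\,\}|, \qquad T_2 := |\{\, i : \sigma(i) = 2,\ Y_i > a\,\}|.
\]
Every non-risky cluster-$1$ point has $W_i \geq c_0$ and every non-risky cluster-$2$ point has $W_i \leq c_0$. Combined with the balance constraint, which equates the two types of misclassification (call each count $A$), a case analysis on whether $T_1 \geq T_2$ or $T_1 < T_2$ tracks how the empirical median is pushed off $c_0$ and yields the deterministic bound $A \leq \max(T_1, T_2) \leq T_1 + T_2$, whence $M/n \leq 2(T_1 + T_2)/n$.

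To finish, two probabilistic ingredients are combined. Rotational invariance of $\bm{g}$ on the unit ball gives a marginal tail bound of the form $\Pr(|Y_i| > a) \leq \exp(-c\,(d-1)\, a^2)$ for an appropriate constant, from which the factor $\exp(-\tfrac{d-1}{4}(\Delta\cos\theta)^2)$ in the statement emerges (after squaring the $\Pr$-bound if the sharp constant requires it). Hoeffding's inequality applied to the $n$ independent Bernoulli indicators comprising $T_1 + T_2$ contributes the $\sqrt{(1/n)\log(n/(2\varepsilon))}$ concentration term. The main obstacle is packaging these estimates into the precise outer-square-root form of the bound: a direct combination gives the stronger-looking $M/n \leq 2\Pr(|Y|>a) + O(\sqrt{\log(1/\varepsilon)/n})$, and recasting this as $\sqrt{p^2 + \text{concentration}}$ requires squaring the intermediate inequality and invoking $(x+y)^2 \leq 2(x^2+y^2)$ with Hoeffding's slack tuned so that the stated exponent and the inner $\sqrt{(1/n)\log(n/(2\varepsilon))}$ term appear with clean constants.
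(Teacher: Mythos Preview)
Your treatment of part~(a) is essentially the paper's: both reduce to the observation that $\|\bm{g}_i\|\le 1$ and $\Delta\cos\theta\ge 2$ force every projection $\langle\bm{x}_i,\bm{v}\rangle$ from $C_1^\natural$ to exceed every projection from $C_2^\natural$, so sorting by $W_i$ returns the planted clusters.

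For part~(b) your route is genuinely different, and mostly sound, but your final paragraph does not land on the stated bound. Your deterministic step $A\le\max(T_1,T_2)$ is correct (the median--threshold case split works), and Hoeffding on the $n$ independent indicators in $T_1+T_2$ yields $R_1\le 2q + O(\sqrt{\log(1/\varepsilon)/n})$ with $q=\Pr(\langle\bm{g},\bm{v}\rangle>\tfrac{\Delta\cos\theta}{2})\le\exp(-\tfrac{d-1}{4}(\Delta\cos\theta)^2)$. That main term is in fact \emph{sharper} than the one in the theorem (the paper's $\sqrt{p}$ decays like $\exp(-\tfrac{d-1}{8}(\Delta\cos\theta)^2)$). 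But the outer square root in the statement is not an artifact you can recover by ``squaring the intermediate inequality'' or invoking $(x+y)^2\le 2(x^2+y^2)$; no repackaging of your linear bound produces $\sqrt{p+t}$ with the constants as written, and in particular the $\log(n/(2\varepsilon))$ (rather than $\log(1/\varepsilon)$) has no source in your argument.

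The paper obtains that exact form by bounding $R_1^2$ rather than $R_1$. Balance gives $|C_1^1\setminus C_1^\natural|=|C_2^1\setminus C_2^\natural|$, so $R_1^2=|C_1^1\setminus C_1^\natural|\cdot|C_2^1\setminus C_2^\natural|/(n/2)^2$. For any misclassified pair $i\in C_1^1\setminus C_1^\natural$, $j\in C_2^1\setminus C_2^\natural$ one has $\langle\bm{g}_i-\bm{g}_j,\bm{v}\rangle\ge\Delta\cos\theta$; letting $B_{ij}$ indicate this event, $R_1^2\le(n/2)^{-2}\sum_{i\in C_2^\natural}\sum_{j\in C_1^\natural}B_{ij}$. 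The $B_{ij}$ are not jointly independent, so the paper decomposes the double sum via a $1$-factorization of $K_{n/2,n/2}$ into $n/2$ sums of $n/2$ independent Bernoullis, applies Hoeffding to each, and union-bounds over the $n/2$ factors---this is precisely where the $\log(n/(2\varepsilon))$ enters. The sphere-concentration bound on $p=\Pr(B_{ij})$ then gives $R_1\le\sqrt{p+t}$ exactly as stated. Your risky-count argument is cleaner and avoids the factorization trick, at the cost of proving a (slightly different, arguably better) inequality than the one in the theorem.
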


We also managed to generalize Theorem~\ref{thm.k=2 misclustering rate} to the $k>2$ case, though we only provide a qualitative statement in this case, since the explicit version obfuscates the forest with its trees.

\begin{theorem}
\label{thm:misclutering ratio}
Fix an initialization $\bmu^0$, and consider data points drawn from the stochastic ball model.
With high probability, the misclustering ratio of the first BalLOT update $\bmF^1$ is bounded above by some function of  $k$, $n$, $d$, $\Delta$, and the distance between $\bmu^0$ and the planted cluster means $\hmu$. 
In particular, this upper bound is smaller when $n$, $d$, and $\Delta$ are larger, and when $\bmu^0$ is closer to $\hmu$.
\end{theorem}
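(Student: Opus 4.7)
The plan is to generalize the argument behind Theorem~\ref{thm.k=2 misclustering rate} by replacing its single separating hyperplane with the \emph{power diagram} (additively weighted Voronoi diagram) induced by the optimal Kantorovich assignment. For fixed centroids $\bmu^0$, strong duality produces dual variables $v_1,\ldots,v_k\in\mathbb{R}$ such that $\bmF^1$ assigns each data point $\bm{x}_i$ to the cluster $j$ minimizing $\|\bm{x}_i-\bmu^0_j\|^2-v_j$. This partitions $\mathbb{R}^d$ into $k$ convex polyhedral cells whose pairwise boundaries $H_{ab}$ are hyperplanes perpendicular to $\bmu^0_a-\bmu^0_b$, offset from the midpoint of the centroid segment by a quantity depending only on $v_a-v_b$.

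First, I would recast the misclassification event in these terms: a point $\bm{x}_i=\hmu_{\sigma(i)}+\bm{g}_i$ with planted label $a$ is correctly classified (under an appropriate label permutation $\pi$) iff it lies on the correct side of every $H_{\pi^{-1}(a),\pi^{-1}(b)}$ with $b\neq a$. A union bound over the $k-1$ facets of each cell reduces the single-point misclassification probability to a sum of half-space probabilities, each governed by the signed distance from $\hmu_a$ to the corresponding hyperplane. Using rotational invariance of $\bm{g}$, each such probability is bounded by a spherical cap measure, which decays like $\exp(-c(d-1)\cdot m_{ab}^2)$ in the margin $m_{ab}$, paralleling the first term in Theorem~\ref{thm.k=2 misclustering rate}.

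Next, I would lower-bound each margin $m_{ab}$ in terms of $\Delta$, $\|\bmu^0-\hmu\|$, and the dual gap $|v_a-v_b|$. A triangle inequality converts the fact that $H_{ab}$ is perpendicular to $\bmu^0_a-\bmu^0_b$ and crosses the inter-centroid segment at a predictable offset into an explicit lower bound on the distance from $\hmu_a$ to $H_{ab}$. To pass from a pointwise probability bound to a bound on the misclustering \emph{ratio}, I would then invoke a Hoeffding-type concentration inequality to show that the empirical misclassification fraction cannot deviate too far from its expectation, producing a sample-size-dependent correction of the form $\sqrt{(1/n)\log(n/\varepsilon)}$.

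I expect the main obstacle to be the control of the dual weights $v_j$, since they depend on the entire empirical data configuration. In the $k=2$ case, the single dual gap is pinned down directly by the median of the one-dimensional projections, but for $k\geq 3$ the weights are coupled through the balance constraints in a way that admits no closed form. My plan is to bound $\max_{a,b}|v_a-v_b|$ by comparing the empirical solution to an idealized symmetric problem in which each data point is replaced by its mean, and invoking LP/transportation stability to show that small deviations in the right-hand side of the balance constraints yield only small deviations in the dual. The quantitative form of this stability bound, and its interaction with the spherical-cap estimate, is where most of the analytical work will concentrate.
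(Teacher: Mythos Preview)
Your approach is conceptually sound but takes a genuinely different route from the paper's proof, and the difference is instructive. You characterize the optimal $\bmF^1$ via \emph{dual variables} (the power diagram), so that misclassification becomes a half-space event whose threshold depends on the data-dependent gaps $v_a-v_b$. The paper instead uses \emph{cyclical monotonicity} of the optimal coupling: for every misclustered index there is a cycle of misassignments along which the $f(\cdot,\bmu^0)$ value cannot decrease, and picking the worst edge yields an inequality of the form
\[
\frac{\|\bmu^0_a-\hmu_b\|^2-\|\bmu^0_a-\hmu_a\|^2}{2\|\bmu^0_a-\bmu^0_b\|}
\le \Big\langle \bm{g}_p,\,\tfrac{\bmu^0_a-\bmu^0_b}{\|\bmu^0_a-\bmu^0_b\|}\Big\rangle
\]
whose left-hand side is a \emph{deterministic} function of $\bmu^0$ and $\hmu$ only. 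This is exactly the obstacle you flagged, solved by a different primitive: cyclical monotonicity gives you the ``margin'' for free, with no dual weights to control and no LP stability argument. Summing the indicator of this event over all $(a,b,p)$ and applying Hoeffding plus a union bound finishes the proof in a few lines.

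Your route would also reach the qualitative conclusion, but you should be aware of a subtle dependency issue beyond the one you name: the dual weights $v_j$ depend on \emph{all} of $\bm{g}_1,\ldots,\bm{g}_n$, so the spherical-cap bound cannot be applied directly to the event $\{\bm{x}_i\text{ on wrong side of }H_{ab}\}$---the threshold is not independent of $\bm{g}_i$. You would first need a high-probability deterministic envelope for the $v_j$'s (your LP-stability step), then apply the cap bound with the worst-case threshold in that envelope. This is doable for a qualitative statement, but it is exactly the work that the cyclical-monotonicity argument eliminates.
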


(See Section~\ref{sec: probabilistic conditions} for proofs of Theorems~\ref{thm.k=2 misclustering rate} and~\ref{thm:misclutering ratio}.)

\begin{figure}
    \centering
    \begin{tikzpicture}
       \node[anchor=west] at (0,0) 
       {
       \includegraphics[width = 3.5in]{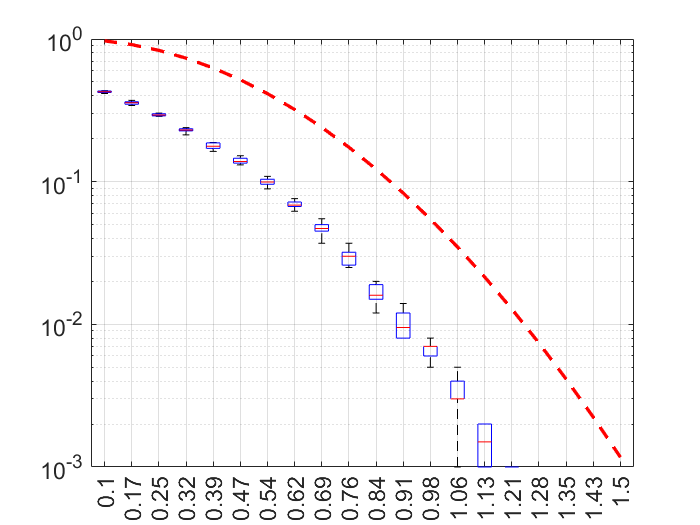}
       };
       \node[rotate=90, anchor=center] at (0.2,0) {\small{one-step misclustering rate}};
       \node at (9, -3) {$\Delta\cos\theta$};
    \end{tikzpicture}    
    \caption{Given data drawn from a balanced stochastic ball model with $k=2$, we run BalLOT and plot the misclustering rate in the first step. For comparison, we plot the $n\to\infty$ version of the threshold given in Theorem~\ref{thm.k=2 misclustering rate}(b). See Experiment~\ref{exper.probabilistic log decaying rate} for details.
    }
    \label{fig: log scale probabilistic mis-clustering rate}
\end{figure}

\begin{experiment}
\label{exper.probabilistic log decaying rate}
Fix $d = 25$, $n = 2000$, and $k = 2$.
For each $\eta \in \{0.075, 0.150, \dotsc, 1.5\}$ and $\Delta \in \{2.1, 2.2,\dotsc, 4.0\}$, put $\theta:=\arccos(\eta/\Delta)$ so that $\eta=\Delta\cos\theta$, and run 10 trials of the following experiment:
Draw data from the appropriate stochastic ball model with $\|\bm{g}_i\| = 1$ almost surely for each $i\in [n]$, run one step of BalLOT with an appropriate initialization, and record the resulting one-step misclustering rate. 
See Figure~\ref{fig: log scale probabilistic mis-clustering rate} for the results.
The misclustering rates for each value of $\Delta\cos\theta$ are displayed in a box plot.  
For comparison, the red dashed line represents the exponential term $\exp(-\frac{d-1}{8} (\Delta\cos\theta)^2)$ from Theorem~\ref{thm.k=2 misclustering rate}(b) when $n\to \infty$.
(The denominator is $8$ because of the square root outside the exponential.)
Apparently, this exponential term matches the empirical log decay rate.
\end{experiment}

\begin{experiment}
\label{exper.two-step recovery phase transition}
Fix $d = 2$, $n = 10000$, and $k = 2$. 
For each $\Delta \in \{2.1,2.2,\dotsc,3.9, 4.0 \}$ and each $\cos\theta\in \{ 0.05, 0.10, \dotsc, 0.95, 1.00 \}$, run 20 trials of the following experiment:
Draw data from the appropriate stochastic ball model with $\|\bm{g}_i\| = 1$ almost surely for each $i\in [n]$, run one step of BalLOT with an appropriate initialization, and record the resulting one-step misclustering rate.
See Figure~\ref{fig: SNR_0 and R_1} for the results.
This illustrates the extent to which Theorem~\ref{thm.k=2 misclustering rate} is sharp; apparently, the value of $\Delta\cos\theta$ is a good predictor of the misclustering rate.
\end{experiment}

\begin{figure}[t]
    \centering
    \begin{tikzpicture}
        \node[anchor=west] at (0,0) 
        {\includegraphics[width=2.65in]
        {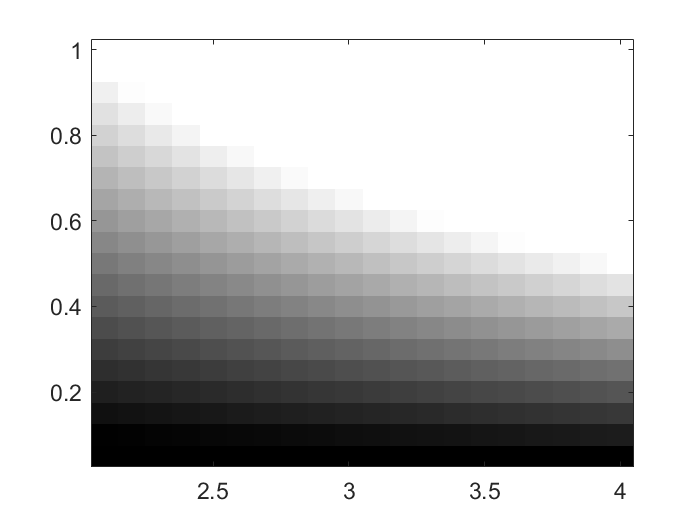}
        };
        \node[rotate=90, anchor=center] at (0.2,0) {$\cos\theta$};
        \node at (6.6, -2.30) {$\Delta$};
    \end{tikzpicture}
    \qquad
    \begin{tikzpicture}
        \node[anchor=west] at (0,0) 
        {\includegraphics[width=2.65in]
        {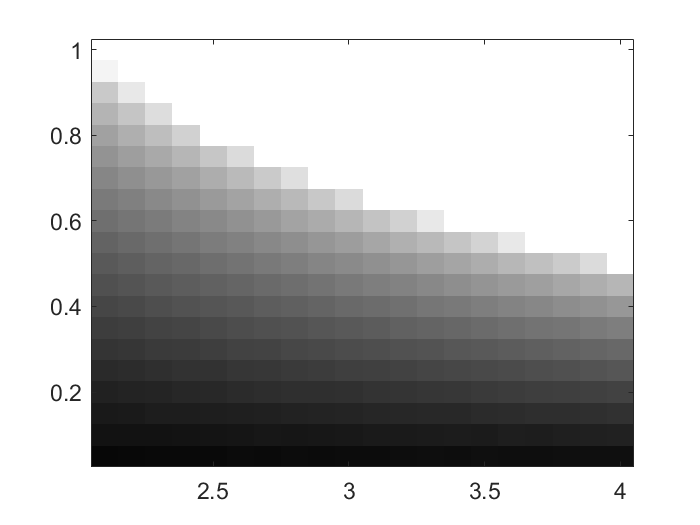}
        };
        \node[rotate=90, anchor=center] at (0.2,0) {$\cos\theta$};
        \node at (6.6, -2.30) {$\Delta$};
    \end{tikzpicture}
    \caption{
    Given data drawn from a balanced stochastic ball model with $k=2$, we run BalLOT and record the misclustering rate in the first step.
    \textbf{(left)}
    Heat map of $\Delta\cos\theta$, with white denoting $\Delta\cos\theta\geq2$.
    \textbf{(right)}
    Heat map of the average misclustering rate, with white denoting one-step recovery.
    Notably, $\Delta\cos\theta\geq2$ implies one-step recovery by Theorem~\ref{thm.k=2 misclustering rate}(a).
    See Experiment~\ref{exper.two-step recovery phase transition} for details.
    }
    \label{fig: SNR_0 and R_1}
\end{figure}

The previous two theorems underscore the need for good initialization.
Our final result identifies different choices of initialization that satisfy the sufficient condition of Theorem~\ref{thm:informal_basin_of_attraction}.

\begin{theorem}
\label{thm:informal_choices of initialization}
\
\begin{itemize}
\item[(a)]
Consider data points that enjoy a balanced clustering into $k=2$ unit balls whose centers are the cluster centroids, which in turn have distance $\Delta\geq2\sqrt{2}$ from each other.
Then initializing BalLOT at any pair of points that achieve the diameter of the dataset results in one-step recovery.
\item[(b)]
Consider data points drawn from a balanced stochastic ball model with $k$ ball centers of minimum distance $\Delta>2$ such that $\mathbb{E}\|\bm{g}\|^2\leq\sigma^2$, where
\[
\sigma^2
:=\frac{\frac{\varepsilon}{2}(\frac{\Delta}{2}-1)^2}{\lceil k\log(\frac{2k}{\varepsilon}) \rceil}
\]
for some $\varepsilon>0$.
Uniformly draw $\lceil k\log(2k/\varepsilon) \rceil$ proto-means from these data points without replacement, and say two proto-means are adjacent if their distance is at most $\min\{\Delta-2,2\}$.
Then with probability $\geq1-\varepsilon$, this graph of proto-means is a disjoint union of $k$ cliques, and furthermore, initializing BalLOT at any choice of clique representatives results in one-step recovery. 
\end{itemize}
\end{theorem}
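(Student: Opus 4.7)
The plan is to handle parts (a) and (b) separately, reducing each to Theorem~\ref{thm:informal_basin_of_attraction} as a black box.

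For part (a), the key observation is that since $\hmu_1,\hmu_2$ are the centroids of $C_1,C_2$, projecting onto $\bm{u}:=(\hmu_2-\hmu_1)/\|\hmu_2-\hmu_1\|$ produces two one-dimensional point sets whose means are $\hmu_1\cdot\bm{u}$ and $\hmu_2\cdot\bm{u}$. Hence there exist $\bm{x}_a\in C_1$ with $\bm{x}_a\cdot\bm{u}\leq\hmu_1\cdot\bm{u}$ and $\bm{x}_b\in C_2$ with $\bm{x}_b\cdot\bm{u}\geq\hmu_2\cdot\bm{u}$, so the diameter of the dataset is at least $(\bm{x}_b-\bm{x}_a)\cdot\bm{u}\geq\Delta\geq 2\sqrt{2}>2$. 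Since any two points in a single unit ball are at distance at most $2$, any diameter-realizing pair must come from different balls, and hence each lies within distance $1$ of the corresponding center. Invoking the $k=2$ version of Theorem~\ref{thm:informal_basin_of_attraction} then reduces to checking $1\leq\sqrt{(\Delta/2)^2-1}$, which is equivalent to $\Delta\geq 2\sqrt{2}$.

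For part (b), I would establish two high-probability events and take a union bound. The first is a coverage event: for each cluster $C_j$, the probability that none of the $m:=\lceil k\log(2k/\varepsilon)\rceil$ proto-means lies in $C_j$ is $\binom{n-n/k}{m}/\binom{n}{m}\leq(1-1/k)^m\leq e^{-m/k}\leq\varepsilon/(2k)$, so a union bound over the $k$ clusters gives every cluster a proto-mean with probability $\geq 1-\varepsilon/2$. The second is a concentration event: for each selected proto-mean $\bm{x}_i$, Markov's inequality yields
\[
\mathbb{P}\big(\|\bm{g}_i\|\geq\tfrac{\Delta}{2}-1\big)\leq\frac{\sigma^2}{(\Delta/2-1)^2},
\]
and a union bound over the $m$ proto-means, combined with the chosen value of $\sigma^2$, ensures that every proto-mean satisfies $\|\bm{g}_i\|\leq\Delta/2-1$ with probability $\geq 1-\varepsilon/2$. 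On the intersection (probability $\geq 1-\varepsilon$), two same-cluster proto-means are at distance at most $2\min(1,\Delta/2-1)=\min(2,\Delta-2)$, meeting the adjacency threshold, while two different-cluster proto-means are at distance at least $\Delta-2\min(1,\Delta/2-1)=\max(2,\Delta-2)$, strictly exceeding the threshold for continuously distributed $\bm{g}$. Thus the graph decomposes into $k$ disjoint cliques aligned with the planted clusters, and any choice of clique representatives forms an initialization within $\Delta/2-1$ of the true centers, so Theorem~\ref{thm:informal_basin_of_attraction} delivers one-step recovery.

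The main obstacle I anticipate is the interplay between the Markov bound $\|\bm{g}_i\|\leq\Delta/2-1$ and the deterministic bound $\|\bm{g}_i\|\leq 1$ in part (b): the former is binding when $\Delta\leq 4$ and the latter when $\Delta\geq 4$, and both must be combined to match the adjacency threshold $\min(2,\Delta-2)$ exactly. Handling the borderline $\Delta=4$ case and ruling out measure-zero equality configurations will require some care, though it should be routine under the rotationally invariant density assumption on $\bm{g}$.
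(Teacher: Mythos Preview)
Your proposal is correct and follows essentially the same route as the paper: reduce both parts to Theorem~\ref{thm:informal_basin_of_attraction} after showing (a) the diameter is at least $\Delta$ so any diameter pair lands in distinct balls, and (b) a coupon-collector bound plus Markov's inequality put all proto-means within $\tfrac{\Delta}{2}-1$ of their centers. The only cosmetic difference is in part~(a): the paper shows $\max_{i\in C_1,\,j\in C_2}\|\bm{x}_i-\bm{x}_j\|^2\ge\Delta^2$ by averaging (the cross terms vanish because the centers are centroids), whereas you project onto $\bm{u}$ and pick one point on each side of its centroid---both arguments exploit the centroid hypothesis and are equally short.
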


Our first \textit{diameter sampling} approach is deterministic, but it only works for $k=2$.
Meanwhile, our second \textit{coupon collecting} approach is random, and it works for general $k$, but requires the within-cluster variance of our data to decay with $k$.
Of course, we are inclined to use the $k$-means++ initialization in practice, but we leave a theoretical analysis of this approach for future work.
See Figure~\ref{fig:max_initialization_example} for an illustration of Theorem~\ref{thm:informal_choices of initialization}(a).
To illustrate part~(b), we conduct an experiment:

\begin{experiment}
\label{exper.proto-means}
Fix $d=2$, $n=1200$, and $k=3$.
Let $\hmu_1,\hmu_2,\hmu_3\in\mathbb{R}^2$ form the vertices of an equilateral triangle with side length $\Delta = 3$.
Put $\varepsilon = 0.4$, and run 1500 trials of the following experiment:
Draw data from the appropriate stochastic ball model with $\|\bm{g}_i\|\sim (\operatorname{Unif}[0,1])^{\alpha}$, where $\alpha$ is selected so that $\mathbb{E}\|\bm{g}_i\|^2=\sigma^2$.
Next, apply the initialization scheme described in Theorem~\ref{thm:informal_choices of initialization}(b), and record the smallest $r$ for which each of the $9$~proto-means resides in a ball of radius $r$ centered at one of the $\hmu_i$.
See Figure~\ref{fig: examples of initialization schemes} for the results.
We observe that for 93.9\% of these trials, it holds that $r\leq\frac{1}{2}=\frac{\Delta}{2}-1$, in which case Theorem~\ref{thm:informal_basin_of_attraction} implies one-step recovery.
\end{experiment}

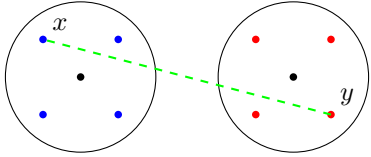
\begin{figure}[t]
    \centering
\begin{tikzpicture}
    \draw (0,0) circle (1);
    \draw ({2*sqrt(2)},0) circle (1);

    \fill (0,0) circle (0.05);
    \fill ({2*sqrt(2)},0) circle (0.05);

    \fill[blue] (0.5, 0.5) circle (0.05);     
    \fill[blue] (0.5, -0.5) circle
    (0.05);
    \fill[blue] (-0.5, 0.5) circle (0.05);  
    \fill[blue] (-0.5, -0.5) circle (0.05); 

    \fill[red] ({0.5+2*sqrt(2)}, 0.5) circle (0.05);     
    \fill[red] ({0.5+2*sqrt(2)}, -0.5)  circle
    (0.05);
    \fill[red] ({-0.5+2*sqrt(2)}, 0.5) circle (0.05);  
    \fill[red] ({-0.5+2*sqrt(2)}, -0.5) circle (0.05);

    \draw[thick, dashed, green] ({0.5+2*sqrt(2)}, -0.5) -- (-0.5, 0.5);

    \node[above right] at (-0.5,0.5) {$x$};
    \node[above right] at
    ({0.5+2*sqrt(2)},-0.5) {$y$};
\end{tikzpicture}
\caption{
An example of Theorem~\ref{thm:informal_choices of initialization}(a). Here, $x$ and $y$ achieve the diameter of the data set, and so initializing BalLOT at these points results in one-step recovery of the displayed clustering.
}
\label{fig:max_initialization_example}
\end{figure}

\begin{figure}[t]
    \centering
    \begin{tikzpicture}
    \node[anchor = west]
    {    
        \includegraphics[width= 0.45\linewidth]{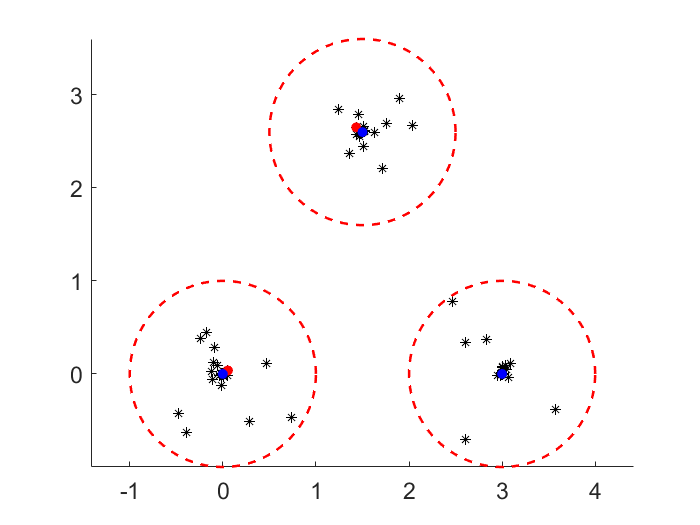}
    };
    \node[anchor = center] at (0.8,2.4) {$x_2$};
    \node at (6.8, -2.3) {$x_1$};
    \end{tikzpicture}
    \qquad
    \begin{tikzpicture}
    \node[anchor = west]
    {
        \includegraphics[width=0.45\linewidth]{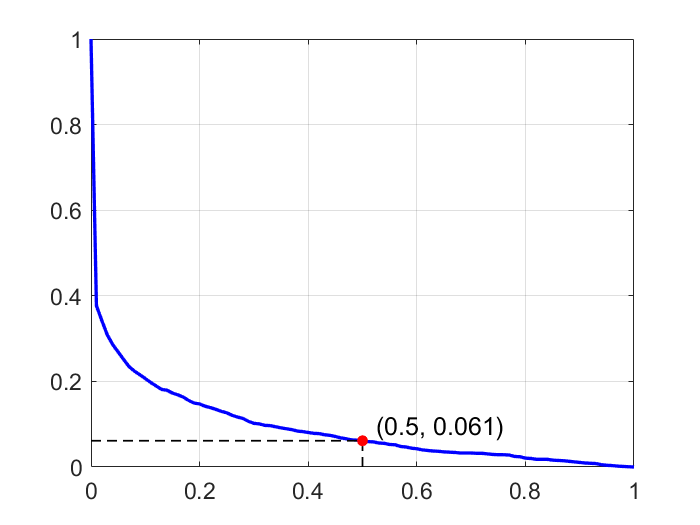}
    };
    \node[rotate=90, anchor=center] at (0.2,0) {\small{ratio of trials with $r>t$}};
    \node at (6.8, -2.30) {$t$};
    \end{tikzpicture}
    \caption{An example of Theorem~\ref{thm:informal_choices of initialization}(b).
    \textbf{(left)} An instance of data drawn according to Experiment~\ref{exper.proto-means}.
    \textbf{(right)}
    For each trial, we record the smallest $r$ for which each of the $9$~proto-means resides in a ball of radius $r$ centered at one of the planted cluster means.
    If $r\leq\frac{\Delta}{2}-1=\frac{1}{2}$, then Theorem~\ref{thm:informal_basin_of_attraction} guarantees one-step recovery.
    Judging by the plot, at least 93.9\% of our initializations satisfied this criterion.
    }
    \label{fig: examples of initialization schemes}
\end{figure}

\section{Proofs}
\label{sec: formal statements}

In this section, we present proofs of the results presented in Section \ref{sec: main results}. 
First, Section~\ref{sec: well-defineness} proves Theorem~\ref{thm.BalLOT is well defined} that the iterations of BalLOT are well defined for generic data.
Next, Section~\ref{sec: average_landscape} proves Theorem~\ref{thm: informal_average_landscape} that the average landscape of BalLOT is well behaved. 
Section~\ref{sec: basin of attraction} then proves Theorem~\ref{thm:informal_basin_of_attraction} that sufficiently close BalLOT initializations result in one-step recovery.
In Section \ref{sec: probabilistic conditions}, we prove the performance guarantees in Theorems~\ref{thm.k=2 misclustering rate} and~\ref{thm:misclutering ratio} that are based on the stochastic ball model.
Finally, Section~\ref{sec:initialization} proves Theorem~\ref{thm:informal_choices of initialization} that certain initialization schemes result in one-step recovery.
    
\subsection{Proof of Theorem~\ref{thm.BalLOT is well defined}}
\label{sec: well-defineness}

First we show that BalLOT must terminate in finitely many steps. 
Note that even though $f$ is guaranteed to monotonically decrease at each step, it is not obvious that BalLOT will avoid cycling between balanced clusterings with the same $f$ value.  
A similar analysis was proposed in~\cite{bradley2000constrained}.

\begin{theorem}\label{thm.finite termination}
Regardless of the initialization $\bmu^0$ of BalLOT, for every $\varepsilon >0 $, there exists $t\in \mathbb{N}$ such that $\|\bmu^{t+1} - \bmu^{t}\|_F < \varepsilon$. Consequently, BalLOT necessarily terminates after finitely many steps.
\end{theorem}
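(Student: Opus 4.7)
The plan is to combine three observations: the BalLOT iterates live in a finite set, the objective $f$ decreases monotonically along them, and strong convexity of $f$ in $\bmu$ forces $\bmu^{t+1}=\bmu^t$ whenever $f$ fails to drop.

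First, I would note that the $\bmF$-update solves a linear program over the transportation polytope $\bUnk$, which has finitely many vertices. Adopting the standard convention that the solver returns a vertex $\bmF^{t+1}$ of $\bUnk$ (as any simplex-type method does), the sequence $\{\bmF^t\}_{t\ge 1}$ lives in a finite set $V$, and hence $\{\bmu^t\}_{t\ge 1}=\{k\bm{X}\bmF^t\}_{t\ge 1}$ lives in the finite set $k\bm{X}V\subset\Rdk$.

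Next, I would record the standard monotone chain
\[
f(\bmF^{t+1},\bmu^{t+1})\;\le\;f(\bmF^{t+1},\bmu^t)\;\le\;f(\bmF^t,\bmu^t),
\]
where the first inequality uses that $\bmu^{t+1}$ minimizes $f(\bmF^{t+1},\cdot)$ and the second uses that $\bmF^{t+1}$ minimizes $f(\cdot,\bmu^t)$. The non-increasing sequence $\{f(\bmF^t,\bmu^t)\}_{t\ge 1}$ takes only finitely many values by the previous paragraph, so it is eventually constant, say equal to $f^\ast$ for all $t\ge T$. Equality throughout the chain then yields $f(\bmF^{t+1},\bmu^{t+1})=f(\bmF^{t+1},\bmu^t)$ for every $t\ge T$. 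Because $f(\bmF,\cdot)=\sum_{i,j}F_{ij}\|\bm{x}_i-\bmu_j\|^2$ is a separable quadratic in the columns $\bmu_j$ with Hessian $(2/k)\bm{I}_d$, it is strongly convex with a unique minimizer; this equality therefore forces $\bmu^{t+1}=\bmu^t$, giving $\|\bmu^{t+1}-\bmu^t\|_F=0<\varepsilon$ for all $t\ge T$.

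The main subtlety lies in the first paragraph: if the $\bmF$-LP admits multiple optimizers (which is ruled out for generic data by Theorem~\ref{thm.BalLOT is well defined} but may happen in non-generic cases), we really do need the convention that BalLOT returns a vertex of $\bUnk$ at each iteration, for otherwise $\bmu^t$ need not lie in a finite set and the stabilization argument fails. With this mild convention in place, the remainder of the argument is routine.
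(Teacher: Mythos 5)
Your proof is correct, but it takes a genuinely different route from the paper's, so a comparison is worth making.

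The paper's argument is a telescoping bound. It first derives the exact identity
\[
f(\bmF^{t+1},\bmu^t)-f(\bmF^{t+1},\bmu^{t+1})=\tfrac{1}{k}\|\bmu^{t+1}-\bmu^t\|_F^2,
\]
then uses optimality of $\bmF^{t+2}$ for $\bmu^{t+1}$ to chain these drops across iterations, obtaining $\sum_{t\ge 0}\|\bmu^{t+1}-\bmu^t\|_F^2\le k\, f(\bmF^1,\bmu^0)<\infty$, whence the terms must eventually fall below any $\varepsilon$. Your argument is instead a finiteness-plus-monotonicity argument: vertices of $\bUnk$ are finite, so the objective values along the iterates form a non-increasing sequence in a finite set and must stabilize, and then strong convexity in $\bmu$ converts equality in the monotone chain into exact stabilization $\bmu^{t+1}=\bmu^t$. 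Each approach buys something the other does not. Your route delivers the stronger conclusion that the iterates \emph{exactly} stop changing after finitely many steps, not merely that the increments shrink below $\varepsilon$; the paper's route gives only the $\varepsilon$-smallness the theorem actually states, but it is quantitative (a concrete bound on $\sum_t\|\bmu^{t+1}-\bmu^t\|_F^2$ in terms of the initial objective) and, importantly, requires no assumption about which optimizer the LP step returns. Your argument genuinely hinges on the solver returning a vertex of $\bUnk$: if an interior optimizer could be returned on a tie, the iterate set is no longer finite and your stabilization step collapses. You flag this honestly, and for generic data Theorem~\ref{thm.BalLOT is well defined} makes the minimizer unique anyway, so the convention is indeed mild in practice; nonetheless, the paper's telescoping argument avoids the issue entirely, which is why it is the cleaner proof for the unconditional statement of the theorem.
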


\begin{proof}
For each $t\in\mathbb{N}$, it holds that
\begin{align*}
f(\bmF^{t+1} , \bmu^t) 
&= \sum_{i\in [n]}\sum_{j\in [k]} F_{ij}^{t+1} \|\bm{x}_i - \bmu^t_{j}\|^2 \\
& = \sum_{i\in [n]}\sum_{j\in [k]} F^{t+1}_{ij} \|\bm{x}_i - \bmu^{t+1}_j + \bmu^{t+1}_j - \bmu^t_{j}\|^2 \\
& = \sum_{i\in [n]}\sum_{j\in [k]} F^{t+1}_{ij} \left( \|\bm{x}_i - \bmu^{t+1}_j\|^2 + \|\bmu^{t+1}_j - \bmu^t_j\|^2 + 2\langle \bm{x}_i - \bmu^{t+1}_j, \bmu^{t+1}_j - \bmu^t_j \rangle \right)  \\
& = f(\bmF^{t+1},\bmu^{t+1}) + \sum_{i\in [n]}\sum_{j\in [k]} F^{t+1}_{ij} 
\left(
\|\bmu^{t+1}_j - \bmu^t_j\|^2 + 2\langle \bm{x}_i - \bmu^{t+1}_j, \bmu^{t+1}_j - \bmu^t_j \rangle 
\right) \\
& = f(\bmF^{t+1},\bmu^{t+1}) +
\frac{1}{k} \|\bmu^{t+1} - \bmu^t\|^2_{F},
\end{align*} 
where the last equality uses the facts that $\bmF\in \bUnk$ and $\bmu^{t+1} = k\bm{X}\bmF^{t+1}$.
Next, since $\bmF^{t+2}$ is optimal for $f(\cdot,\bmu^{t+1})$, it follows that
\[
f(\bmF^{t+1} , \bmu^t)
\geq f(\bmF^{t+2},\bmu^{t+1})
+
\frac{1}{k} \|\bmu^{t+1} - \bmu^t\|^2_{F}.
\]
Since our choice for $t$ was arbitrary, we have a telescoping bound:
\begin{align*}
\frac{1}{k} \sum_{t = 0}^{T-1} \|\bmu^{t+1} - \bmu^t\|_F^2
&\leq \sum_{t = 0}^{T-1}\left(f(\bmF^{t+1} , \bmu^t)
-f(\bmF^{t+2},\bmu^{t+1})\right)\\
&=f(\bmF^{1} , \bmu^0)-f(\bmF^{T+1},\bmu^{T})\\
&\leq f(\bmF^{1} , \bmu^0).
\end{align*}
In particular, $\sum_{t = 0}^{\infty} \|\bmu^{t+1} - \bmu^t\|_F^2<\infty$, and so there exists $t\in \mathbb{N}$ such that $\|\bmu^{t+1} - \bmu^t\|_F < \varepsilon$.
\end{proof}

Next, we show that for generic data, BalLOT delivers an integral coupling matrix $\bmF^t$ for \textit{every} iteration. 
We first show in Theorem~\ref{thm: generic k points initialization} that initializing BalLOT with any $k$ of the data points (as in the $k$-means++ initialization) results in an integral $\bmF^1$ for generic data.
Next, we establish in Theorem~\ref{thm.generic initialization by partition} that if $\bmu^t$ forms the centroids of any balanced partition of the data, then $\bmF^{t+1}$ is also integral for generic data.
Since there are finitely many partitions of the data, it follows that $\bmF^t$ is necessarily integral for every iteration, provided the data is generic.

\begin{theorem}[Initialization by $k$ data points]
Suppose $\{\bm{x}_i\}_{i\in[n]}$ is generic and $\bmu^0$ consists of $k$ distinct choices of $\bm{x}_{i}$.
Then the minimizer of $f(\bmF,\bmu^0)$ subject to $\bmF\in \bUnk$ is unique and integral. 
    \label{thm: generic k points initialization}
\end{theorem}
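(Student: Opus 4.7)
The plan is to view the problem as a linear program in $\bmF$ (since $f(\cdot,\bmu^0)$ is linear in $\bmF$) over the transportation polytope $\bUnk$. By total unimodularity of the row/column-sum constraint matrix, the vertices of $\bUnk$ are exactly the integral balanced coupling matrices, each in bijection with a balanced partition $\pi\colon[n]\to[k]$. Since the LP minimum is attained at a vertex, it suffices to prove that for generic $\bm{X}$, no two distinct vertices $v_1\neq v_2$ simultaneously minimize the cost.

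For any pair $v_1,v_2$, the cost difference $P_{v_1,v_2}(\bm{X}) := \langle c(\bm{X}),\, v_1-v_2\rangle$ is a polynomial in the entries of $\bm{X}$, where $c_{ij}(\bm{X}) = \|\bm{x}_i-\bm{x}_{J(j)}\|^2$ and $J\colon[k]\to[n]$ indexes the seeds (so $\bmu^0_j = \bm{x}_{J(j)}$). Using that $D := v_1-v_2$ has vanishing row and column sums, this simplifies to $P_{v_1,v_2}(\bm{X}) = -2\sum_{i,j} D_{ij}\langle \bm{x}_i,\bm{x}_{J(j)}\rangle$. If $P_{v_1,v_2} \not\equiv 0$, then $\{\bm{X}:P_{v_1,v_2}(\bm{X})=0\}$ is a proper algebraic subvariety, avoided by generic $\bm{X}$, so the two vertices have distinct costs.

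The main obstacle is the degenerate case $P_{v_1,v_2} \equiv 0$. Reading off the coefficient of each monomial $x_{a,s}x_{b,s}$ forces $D$ to be supported on the seed rows $S := J([k])$, with restriction $E := D|_S$ (viewed as a $k\times k$ matrix) antisymmetric with zero row sums; translating via $\tau_\ell := \pi_\ell \circ J \colon [k] \to [k]$ for $\ell=1,2$, this says $\pi_1 = \pi_2$ off $S$ while the functional graphs of $\tau_1$ and $\tau_2$ on $[k]$ share the same underlying undirected multigraph. Because reversing a directed cycle of length $1$ or $2$ has no effect on a functional graph (and tree orientations are forced toward the cycle they hang off), the assumption $\tau_1 \neq \tau_2$ forces at least one directed cycle $(j_1,\ldots,j_m)$ with $m\geq 3$ and $\tau_1(j_a)=j_{a+1}$ (indices mod $m$) to be reversed in $\tau_2$. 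I would then construct a balanced partition $\pi^*$ from $\pi_1$ by setting $\pi^*(J(j_a))=j_a$ for $a=1,\ldots,m$ and leaving all other assignments unchanged; along the cycle, each cluster $j_a$ loses $J(j_{a-1})$ but gains $J(j_a)$, so cluster sizes are preserved. A direct computation gives
\[
\mathrm{cost}(\pi^*)-\mathrm{cost}(\pi_1) = -\sum_{a=1}^m \|\bm{x}_{J(j_a)}-\bm{x}_{J(j_{a+1})}\|^2,
\]
which is strictly negative on a Zariski-open dense set of $\bm{X}$ (since the indices $J(j_a)$ are pairwise distinct). Hence neither $v_1$ nor $v_2$ is optimal for generic $\bm{X}$. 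Intersecting the finitely many Zariski-open dense sets produced over all pairs of distinct vertices yields an open dense set of $\bm{X}$ on which the LP minimizer is unique, and therefore integral.
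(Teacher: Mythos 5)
Your proof is correct and follows essentially the same approach as the paper: translate the LP tie between two integral couplings into a vanishing polynomial in the generic data, read off coefficients to conclude that the two assignments agree away from the seed set $S$ and are related by an antisymmetric (``swap'') structure on it, and then reassign the implicated seeds to themselves to obtain a strictly cheaper balanced coupling. Your intermediate route through functional-digraph cycle reversal is a more elaborate way of arriving at what the paper derives directly---that $\alpha_1$ and $\alpha_2$ restrict to inverse derangements of the disagreement set $K'$---and the concluding cost-decreasing move (identity on one cycle versus identity on all of $K'$) is the same in substance.
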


\begin{proof}
Suppose there are two integral couplings $\bmF^1,\bmF^2\in\bUnk$ with $\bmF^1\neq\bmF^2$ for which 
\begin{equation}
\label{eq.equal f vals}
f(\bmF^1,\bmu^0)=f(\bmF^2,\bmu^0).
\end{equation}
We claim that $\bmF^1$ and $\bmF^2$ are suboptimal for $f(\cdot,\bmu^0)$.
Since minimizing $f(\bmF,\bmu^0)$ subject to $\bmF\in \bUnk$ is a compact linear program, an integral minimizer necessarily exists, and so the result would follow.

It will be convenient to reformulate \eqref{eq.equal f vals}, but this requires additional notation.
Let $K\subseteq[n]$ denote the set of indices $i$ for which $\bm{x}_i$ is one of the $k$ vectors in $\bmu^0$, and let $\alpha_1,\alpha_2\colon [n]\to K$ correspond to the cluster assignment functions associated with $\bm{F}^1$ and $\bm{F}^2$, respectively.
In particular, $\bm{x}_i$ is assigned by $\bm{F}^\ell$ to $\bm{x}_{\alpha_\ell(i)}$, which in turn is one of the vectors in $\bmu^0$.

Then \eqref{eq.equal f vals} is equivalent to our data $\bm{X}:=\{\bm{x}_i\}_{i\in[n]}$ satisfying $p(\bm{X})=0$, where
\[
p(\bm{X}) 
:= \sum_{i\in [n]} \la \bm{x}_i, \bm{x}_{\alpha_1(i)} - \bm{x}_{\alpha_2(i)} \ra.
\]
Since $\bm{X}$ is generic, it follows that $p$ is identically zero.
We will leverage cancellations in the formal polynomial $p(\bm{X})$ to infer a way to decrease $f(\cdot,\bmu^0)$, thereby demonstrating the claimed suboptimality of $\bmF^1$ and $\bmF^2$.

For each $i\not\in K$, the only appearance of $\bm{x}_i$ in $p(\bm{X})$ is in the linear term $\la \bm{x}_i, \bm{x}_{\alpha_1(i)} - \bm{x}_{\alpha_2(i)} \ra$.
Since $p(\bm{X})$ is identically zero, this term must be identically zero, too, i.e., $\alpha_1(i)=\alpha_2(i)$. 
Rearranging $p(\bm{X})=0$ then gives
\[
\sum_{i\in K} \la \bm{x}_i, \bm{x}_{\alpha_1(i)} \ra
=\sum_{i\in K} \la \bm{x}_i,  \bm{x}_{\alpha_2(i)} \ra.
\]
Equating terms, then for each $i\in K$, either $\alpha_1(i)=\alpha_2(i)$ or both $\alpha_1(i)=j$ and $\alpha_2(j)=i$.
In the latter case, we may assume $j\neq i$, since otherwise $\alpha_1(i)=i=\alpha_2(i)$.
Let $K'$ denote the subset of $K$ for which $\alpha_1(i)\neq\alpha_2(i)$.
Notably, $K'$ is nonempty since $\bm{F}^1\neq\bm{F}^2$ by assumption.
Furthermore, $\alpha_1$ and $\alpha_2$ induce inverse derangements of $K'$.
As such, $\bm{F}^1$ assigns some $\bm{x}_i$ with $i\in K'$ to a different $\bm{x}_j$ with $j\in K'$, and so we can decrease $f(\cdot,\bmu^0)$ by instead using the identity assignment on $K'$.
\end{proof}

\begin{theorem}[Initialization by partition of data]\label{thm.generic initialization by partition}
Suppose $\{\bm{x}_i\}_{i\in[n]}$ is generic and $\bmu^0$ consists of the centroids of a balanced partition of the $\bm{x}_i$'s.
Then the minimizer of $f(\bmF,\bmu^0)$ subject to $\bmF\in \bUnk$ is unique and integral.
\end{theorem}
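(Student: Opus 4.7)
My plan is to follow the same template as Theorem~\ref{thm: generic k points initialization}. I would assume for contradiction that two distinct integral couplings $\bmF^1, \bmF^2 \in \bUnk$, corresponding to balanced assignments $\alpha_1, \alpha_2 \colon [n]\to [k]$, both minimize $f(\cdot,\bmu^0)$. I will show this forces a rigid block-constant structure on $\alpha_1, \alpha_2$ and then exhibit a third balanced assignment $\alpha^\star$ whose coupling $\bmF^\star$ has strictly smaller objective value, contradicting minimality. Integrality of the minimizer will then follow from uniqueness, because the extreme points of the transportation polytope $\bUnk$ are exactly the balanced integral couplings.

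Let $\beta \colon [n] \to [k]$ encode the initial balanced partition, so $\bmu^0_j = (k/n)\sum_{s\colon\beta(s)=j}\bm{x}_s$. Expanding $\|\bm{x}_i-\bmu^0_j\|^2$ and using balance of $\alpha_1$ and $\alpha_2$ to cancel the $\|\bm{x}_i\|^2$ and $\|\bmu^0_j\|^2$ contributions, the equation $f(\bmF^1,\bmu^0) = f(\bmF^2,\bmu^0)$ reduces to the polynomial identity
\[
\sum_{i,s \in [n]} B_{is}\, \langle \bm{x}_i, \bm{x}_s \rangle = 0,
\qquad
B_{is} := \mathbf{1}_{\beta(s)=\alpha_1(i)} - \mathbf{1}_{\beta(s)=\alpha_2(i)}.
\]
Since $\bm{X}$ is generic, this forces $B$ to be antisymmetric (diagonal zero and $B+B^T=0$). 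Moreover, $B_{is}$ depends on $s$ only through $\beta(s)$, so the antisymmetry descends to a well-defined antisymmetric matrix $\tilde{B} \in \{-1,0,1\}^{k\times k}$ with $\tilde{B}_{\ell,j} = B_{is}$ whenever $\beta(i)=\ell$ and $\beta(s)=j$.

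The combinatorial heart of the argument is to extract from $\tilde B$ a block-constant picture of $\alpha_1,\alpha_2$: setting $L := \{\ell : \tilde{B}_{\ell,\cdot}\neq 0\}$, each row $\tilde{B}_{\ell,\cdot}$ with $\ell \in L$ has one $+1$ entry at a column $\pi_1(\ell)$ and one $-1$ entry at a column $\pi_2(\ell)$, and this will force $\alpha_1, \alpha_2$ to be constant on $\beta^{-1}(\ell)$ with values $\pi_1(\ell), \pi_2(\ell)$ respectively, while $\alpha_1 = \alpha_2$ on $\beta^{-1}([k]\setminus L)$. Balance of $\alpha_1$ then upgrades $\pi_1$ to a permutation of $L$, and antisymmetry of $\tilde{B}$ yields the relation $\pi_2 = \pi_1^{-1}$; since $\alpha_1 \neq \alpha_2$, the permutation $\pi_1$ is not an involution and in particular not the identity. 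I then define $\alpha^\star(i) := \alpha_1(i)$ for $i \in \beta^{-1}([k]\setminus L)$ and $\alpha^\star(i) := \ell$ for $i \in \beta^{-1}(\ell)$ with $\ell \in L$; this is balanced because $\pi_1$ permutes $L$. Using the identity $\sum_{i\in\beta^{-1}(\ell)}\bm{x}_i = (n/k)\bmu^0_\ell$, a routine expansion gives
\[
f(\bmF^\star,\bmu^0) - f(\bmF^1,\bmu^0) \;=\; -\frac{1}{k}\sum_{\ell \in L}\|\bmu^0_\ell - \bmu^0_{\pi_1(\ell)}\|^2.
\]
For generic $\bm{X}$ the centroids $\bmu^0_\ell$ are pairwise distinct (the equality $\sum_{s\in\beta^{-1}(\ell)}\bm{x}_s = \sum_{s\in\beta^{-1}(\ell')}\bm{x}_s$ is a nontrivial polynomial relation for $\ell\neq\ell'$), and since $\pi_1 \neq \mathrm{id}$ some $\ell \in L$ satisfies $\pi_1(\ell)\neq\ell$; hence the right-hand side is strictly negative, contradicting the minimality of $\bmF^1$. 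The main obstacle I anticipate is this combinatorial descent: checking carefully that nonzero entries of $\tilde{B}$ propagate via antisymmetry and balance to force the inverse-permutation relationship $\pi_2 = \pi_1^{-1}$, rather than some weaker structure that would not admit a strict improvement.
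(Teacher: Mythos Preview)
Your argument is correct and follows essentially the same route as the paper's proof: both set up the contradiction via two distinct integral optima, extract a bilinear polynomial identity from genericity, deduce that $\alpha_1,\alpha_2$ act as block permutations on the initial partition over an ``active'' set, and finish by showing that replacing this permutation with the identity strictly decreases the objective. Your packaging via the antisymmetry of $B$ (and the reduced matrix $\tilde B$) is equivalent to the paper's ``exchange property''; the explicit improvement formula $-\tfrac{1}{k}\sum_{\ell\in L}\|\bmu^0_\ell-\bmu^0_{\pi_1(\ell)}\|^2$ is a clean variant of the paper's centroid inequality, and the observation $\pi_2=\pi_1^{-1}$ is a nice bonus (though for the contradiction you only need $L\neq\emptyset$, since antisymmetry already forces $\tilde B_{\ell\ell}=0$ and hence $\pi_1(\ell)\neq\ell$ for every $\ell\in L$).
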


\begin{proof}
Following the previous proof, we suppose there are two integral couplings $\bmF^1,\bmF^2\in\bUnk$ with $\bmF^1\neq\bmF^2$ for which  $f(\bmF^1,\bmu^0)=f(\bmF^2,\bmu^0)$, and it suffices to show that $\bmF^1$ and $\bmF^2$ are suboptimal for $f(\cdot,\bmu^0)$.
Suppose the partition of $[n]$ that determines $\bmu^0$ is given by 
\[
C_1\sqcup \cdots \sqcup C_k 
= [n].
\]
For $\ell\in \{1,2 \}$, we define $\alpha_\ell\colon [n]\to \{ C_1, C_2,\dotsc, C_k\}$ to be the set-valued assignment by $\bmF^\ell$, that is,
\[
\alpha_\ell(i) 
= C_j\quad \text{if $\bmF^\ell$ assigns $\bm{x}_i$ to the centroid of $\{\bm{x}_{i'}\}_{i'\in C_j}$}.
\] 
Then our assumption $f(\bmF^1,\bmu^0)=f(\bmF^2,\bmu^0)$ is equivalent to $p(\bm{X})=0$, where
\[
p(\bm{X}) 
= \sum_{i\in \cA} \la \bm{x}_i , \sum_{j\in \alpha_1(i)} \bm{x}_j - \sum_{j' \in \alpha_2(i) } \bm{x}_{j'}  \ra,
\]
and $\cA := \{ i\in [n]: \alpha_1(i) \neq \alpha_2(i)\}$.
As before, since $p(\bm{X})=0$ and $\bm{X}$ is generic, it follows that $p(\bm{X})$ is identically zero.
In particular, the following identity holds when treating the $\bm{x}_i$'s as formal variables:
\[
\sum_{i\in \cA}\sum_{j\in \alpha_1(i)} \la \bm{x}_i ,  \bm{x}_j \ra 
= \sum_{i\in \cA} \sum_{j' \in \alpha_2(i) } \la \bm{x}_i,\bm{x}_{j'}  \ra.
\]
Notably, every term on the left-hand side corresponds to a term on the right-hand side, and vice versa.
Considering $\alpha_1(i)$ and $\alpha_2(i)$ are disjoint whenever $\alpha_1(i)\neq \alpha_1(i)$, it follows that
\[
\text{$i\in \cA$ ~and~ $j\in \alpha_1(i)$}
\quad
\Longleftrightarrow
\quad
\text{$j \in \cA$ ~and~ $i\in \alpha_2(j)$}.
\]
We refer to this equivalence as the \textit{exchange property}.
In what follows, we repeatedly appeal to the exchange property in order to uncover how $\alpha_1$ and $\alpha_2$ behave over $\cA$.
(Throughout, we write $i\sim j$ if $i$ and $j$ belong to the same $C_\ell$.)

First, we claim that $\cA$ is a union of $C_\ell$'s. 
Pick any $i\in\cA$ and any $i'\sim i$.
If we select $j\in\alpha_1(i)$, then by the exchange property (in the forward direction), we have $j\in\cA$ and $i\in\alpha_2(j)$.
Since $i'\sim i$, we also have $i'\in\alpha_2(j)$.
Since $j\in\cA$ and $i'\in\alpha_2(j)$, the exchange property (in the reverse direction) then gives that $i'\in\cA$.

Next, we claim that for each $\ell\in\{1,2\}$, $\alpha_\ell$ maps points in $\cA$ to points in $\cA$.
The $\ell=1$ case follows from the exchange property in the forward direction, while the $\ell=2$ case follows from the exchange property in the reverse direction.

Next, we claim that $i\sim i'$ in $\cA$ implies $\alpha_\ell(i)=\alpha_\ell(i')$ for both $\ell\in\{1,2\}$, i.e., each $\alpha_\ell$ maps clusters in $\cA$ to clusters in $\cA$.
We will prove this for $\ell=1$, as the proof for $\ell=2$ is identical.
For $i\sim i'$ in $\cA$, the exchange property (in the forward direction) gives that for every $j\in\alpha_1(i)$ and $j'\in\alpha_1(i')$, it holds that $j,j'\in\cA$ and $i\in\alpha_2(j)$ and $i'\in\alpha_2(j')$, in which case $i\sim i'$ forces $\alpha_2(j)=\alpha_2(j')$.
So we have $j\in\cA$ and $i,i'\in \alpha_2(j)$.
Then the exchange property (in the reverse direction) gives that $j$ is in both $\alpha_1(i)$ and $\alpha_1(i')$, and so $\alpha_1(i)=\alpha_1(i')$.

Finally, we claim that for each $\ell\in\{1,2\}$, if $i,i'\in \cA$ and $\alpha_\ell(i) = \alpha_\ell(i')$, then $i\sim i'$, i.e., each $\alpha_\ell$ permutes the clusters in $\cA$.
(Again, we only prove this for $\ell=1$.)
Indeed, take any $j\in \alpha_1(i) = \alpha_1(i')$.
By exchange property (in the forward direction), it follows that $j\in \cA$ and $i,i'\in  \alpha_2(j)$, meaning $i\sim i'$.

\bigskip

At this point, we know that $\alpha_1$ and $\alpha_2$ both permute the the clusters in $\cA$. 
Next, since $\bm{X}$ is generic, the cluster centroids are distinct, and so equality in the lower bound
\begin{align*}
\sum_{a\in [k]} \sum_{\substack{i\in\cA\\\alpha_1(i) = C_a}} \| \bm{x}_i - \bmu^0_a\|^2 
&= \sum_{a\in [k]}
\sum_{\substack{i\in\cA\\\alpha_1(i) = C_a}} \Bigg(
    \bigg\|\bm{x}_i - \frac{1}{n/k} \sum_{\substack{j\in\cA\\\alpha_1(j) = C_a}} \bm{x}_j \bigg\|^2 + \bigg\|\bmu^0_{a} -\frac{1}{n/k} \sum_{\substack{j\in\cA\\\alpha_1(j) = C_a}} \bm{x}_j \bigg\|^2 \Bigg)
    \\
    & \ge \sum_{a\in [k]} \sum_{\substack{i\in\cA\\\alpha_1(i) = C_a}} \bigg\|\bm{x}_i - \frac{1}{n/k} \sum_{\substack{j\in\cA\\\alpha_1(j) = C_a}} \bm{x}_j \bigg\|^2
\end{align*} 
occurs precisely when every $\bmu^0_a$ is the centroid of points indexed by $C_a$, i.e., $\alpha_1$ maps every cluster in $\cA$ to itself. 
Similarly, the same holds for $\alpha_2$. 
Since $\bmF^1\neq \bmF^2$, it necessarily holds that $\alpha_1\neq \alpha_2$ on $\cA$, and so we can decrease $f(\cdot, \bmu^0)$ by changing one of them to the identity cluster permutation on $\cA$.
\end{proof}

\subsection{Proof of Theorem~\ref{thm: informal_average_landscape} }
\label{sec: average_landscape}

Fix distinct ball centers $\hmu_1,\ldots,\hmu_k\in\mathbb{R}^d$ and a ground truth cluster assignment $\sigma\colon[n]\to[k]$, and consider the random data points $\bm{x}_i = \hmu_{\sigma(i)}+\bm{g}_i$, where $\bm{g}_1,\ldots,\bm{g}_n\in\mathbb{R}^d$ are independent realizations of a random vector $\bm{g}$ with mean zero.
Then a straightforward calculation gives
\[
\mathbb{E}f(\bmF,\bmu)
=\sum_{i\in[n]}\sum_{j\in[k]}F_{ij}\|\bmu_j-\hmu_{\sigma(i)}\|^2+\mathbb{E}\|\bm{g}\|^2.
\]
Notably, the second term above is constant, so it is equivalent to minimize the first term, which we can simplify further by interchanging sums:
\[
\sum_{i\in[n]}\sum_{j\in[k]}F_{ij}\|\bmu_j-\hmu_{\sigma(i)}\|^2
=\sum_{j\in[k]}\sum_{p\in[k]}\underbrace{\sum_{\substack{i\in[n]\\\sigma(i)=p}}F_{ij}}_{(\Pi(F))_{pj}}\|\bmu_j-\hmu_{p}\|^2.
\]
Notably, $\Pi\colon\bUnk \to\Pi(\bUnk)$ is the restriction of a surjective linear map $\mathbb{R}^{n\times k}\to\mathbb{R}^{k\times k}$ to $\bUnk$, and so it's an open map (in the subspace  topologies of $\bUnk$ and $\Pi(\bUnk)$).
One may show that $\Pi(\bUnk)=\bUkk$, which in turn is the set of doubly stochastic matrices (scaled by $1/k$).
(The less obvious containment in this set equality is $\bUkk\subseteq\Pi(\bUnk)$, but one may verify that each $\bmpi\in\bUkk$ is reached by $\bmF\in\bUnk$ defined by $F_{ij}=\frac{k}{n}\pi_{\sigma(i)j}$.)

Now take a local minimizer $(\bmF^0,\bmu^0)$ of $\mathbb{E}f$ subject to $\bUnk\times\Rdk$.
(That is, there exists a neighborhood of $(\bmF^0,\bmu^0)$ in $\bUnk\times\Rdk$ over which $(\bmF^0,\bmu^0)$ minimizes $\mathbb{E}f$.)
Put $\bmpi^0:=\Pi(\bmF^0)$.
Then by the above discussion, $(\bmpi^0,\bmu^0)$ is a local minimizer of 
\[
h(\bmpi,\bmu)
:=\sum_{p\in[k]}\sum_{j\in[k]}\pi_{pj}\|\bmu_j-\hmu_{p}\|^2
\]
subject to $\bUkk\times\Rdk$.
By the Birkhoff--von Neumann theorem, we may express $\bmpi^0$ as a convex combination of $k\times k$ permutation matrices (scaled by $1/k$).
Furthermore, since minimizing $h(\cdot,\bmu^0)$ over $\bUkk$ is a linear program, the value of $h(\cdot,\bmu^0)$ is constant over the convex hull of these scaled permutations.
Let $\bmpi^1$ denote one such scaled permutation matrix.
Since $(\bmpi^0,\bmu^0)$ is locally optimal, there necessarily exists $\varepsilon>0$ such that $\bmpi^\varepsilon:=(1-\varepsilon)\bmpi^0+\varepsilon\bmpi^1$ resides in a neighborhood over which $(\bmpi^0,\bmu^0)$ is optimal.
Since $\bmpi^\varepsilon$ resides in the convex set of minimizers of $h(\cdot,\bmu^0)$, it follows that $(\bmpi^\varepsilon,\bmu^0)$ is also locally optimal.

Observe that for any fixed $\bmpi$, the function $h(\bmpi,\cdot)$ is strongly convex with unique minimizer given by $k\hmu\bmpi$.
Since $(\bmpi^0,\bmu^0)$ and $(\bmpi^\varepsilon,\bmu^0)$ are both locally optimal, it follows that they minimize $h(\bmpi^0,\cdot)$ and $h(\bmpi^\varepsilon,\cdot)$, respectively, and so $\bmu^0$ is equal to both $k\hmu\bmpi^0$ and $k\hmu\bmpi^\varepsilon$.
Considering $\bmpi^1$ is an affine combination of $\bmpi^0$ and $\bmpi^\varepsilon$, it follows that $\bmu^0$ also equals $k\hmu\bmpi^1$.
In particular, $h(\bmpi^1,\cdot)$ is uniquely minimized by $\bmu^0$.
Since $\bmpi^1$ is a scaled permutation matrix, it follows that $\bmu^0$ is obtained by permuting the columns of $\hmu$.
Since the columns of $\hmu$ are distinct by assumption, it then follows the minimizer of $h(\cdot,\bmu^0)$ over $\bUkk$ is unique, i.e., $\bmpi^0$ is the scaled permutation matrix that achieves $h(\bmpi^0,\bmu^0)=0$.
As such, $(\bmpi^0,\bmu^0)$ globally minimizes $h$, and so $(\bmF^0,\bmu^0)$ globally minimizes $\mathbb{E}f$.

\subsection{Proof of Theorem~\ref{thm:informal_basin_of_attraction}}
\label{sec: basin of attraction}

\begin{lemma}
\label{lem.delta cos theta}
Consider data points that enjoy a balanced clustering into $k=2$ unit balls with centers $\hmu_1,\hmu_2\in \mathbb{R}^d$.
Given a BalLOT initialization $\bmu^0_1,\bmu^0_2\in\mathbb{R}^d$, denote the planted distance and the initialization's cosine similarity by
\[
\Delta:=\|\hmu_1-\hmu_2\|,
\qquad
\cos\theta:=\bigg|\bigg\langle \frac{\bmu^0_1-\bmu^0_2}{\|\bmu^0_1-\bmu^0_2\|},\frac{\hmu_1-\hmu_2}{\|\hmu_1-\hmu_2\|}\bigg\rangle\bigg|.
\]
If $\Delta\cos\theta>2$, then BalLOT achieves one-step recovery.
\end{lemma}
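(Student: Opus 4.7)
The plan is to reduce the first BalLOT update, when $k=2$, to the problem of sorting the data along a single direction, and then use the hypothesis $\Delta\cos\theta>2$ to show that this one-dimensional sort cannot fail to identify the planted clustering. First I would use the balance constraint $F_{i1}+F_{i2}=\tfrac{1}{n}$ to eliminate $F_{i2}$ in the objective, collapsing the first-step Kantorovich problem to
\[
\min\bigg\{\,\sum_{i\in[n]} F_{i1}\big(\|\bm{x}_i-\bmu_1^0\|^2-\|\bm{x}_i-\bmu_2^0\|^2\big)\ :\ F_{i1}\in\big[0,\tfrac{1}{n}\big],\ \sum_i F_{i1}=\tfrac{1}{2}\,\bigg\}.
\]
Since the cost coefficient equals $-2\langle\bm{x}_i,\bm{v}\rangle+\|\bmu_1^0\|^2-\|\bmu_2^0\|^2$ for $\bm{v}:=\bmu_1^0-\bmu_2^0$, the optimum sets $F_{i1}=\tfrac{1}{n}$ precisely for the $n/2$ indices whose projections $\langle\bm{x}_i,\bm{v}\rangle$ are the largest, and $F_{i1}=0$ otherwise.

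Next I would show that the hypothesis $\Delta\cos\theta>2$ forces this top-$n/2$ set to coincide with one of the planted clusters. Writing $\bm{w}:=\bm{v}/\|\bm{v}\|$ and $\bm{x}_i=\hmu_{\sigma(i)}+\bm{g}_i$ with $\|\bm{g}_i\|\le 1$, the Cauchy--Schwarz inequality yields $|\langle\bm{g}_i,\bm{w}\rangle|\le 1$, so each projection $\langle\bm{x}_i,\bm{w}\rangle$ lies in the interval $\big[\langle\hmu_{\sigma(i)},\bm{w}\rangle-1,\ \langle\hmu_{\sigma(i)},\bm{w}\rangle+1\big]$. The two such intervals, one for each planted cluster, are centered at points separated by $|\langle\hmu_1-\hmu_2,\bm{w}\rangle|=\Delta\cos\theta>2$, so they are disjoint. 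In particular every projection coming from one planted cluster is strictly larger than every projection coming from the other.

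Finally, because the planted clustering is balanced and the two groups of projections are strictly separated, the sort-and-split procedure from the first paragraph must return exactly the planted partition (up to a swap of the two labels), which is precisely one-step recovery. I do not anticipate a real obstacle here: the only point to watch is that the strict inequality $\Delta\cos\theta>2$ rules out ties in the sort, so the top-$n/2$ selection is unambiguous and cannot straddle the planted label boundary. It is also worth noting that this reduction explains why $k=2$ admits a better threshold than the general-$k$ analysis, since for $k\ge 3$ the Kantorovich problem no longer collapses to a single one-dimensional sort and one must instead control $k-1$ half-spaces simultaneously.
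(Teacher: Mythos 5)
Your proof is correct, but it takes a genuinely different route from the paper. The paper argues by contrapositive using a pairwise-swap (two-cycle) optimality condition: if some $i,j$ are misclustered, then swapping them cannot decrease the objective, which forces $\langle \bm{g}_i-\bm{g}_j,\bmu^0_1-\bmu^0_2\rangle\geq\langle\hmu_1-\hmu_2,\bmu^0_1-\bmu^0_2\rangle$, and Cauchy--Schwarz on the left-hand side then gives $\Delta\cos\theta\leq 2$. You instead solve the $k=2$ Kantorovich LP explicitly: eliminate $F_{i2}$, observe that the reduced cost is affine in the one-dimensional projection $\langle\bm{x}_i,\bmu^0_1-\bmu^0_2\rangle$, conclude that the minimizer is the top-$n/2$ set by projection, and then use the hypothesis to show the two clusters' projection intervals are disjoint. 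Both arguments are correct; yours is more constructive, and it makes transparent \emph{why} $k=2$ admits a sharper threshold (the LP collapses to a one-dimensional sort), whereas the paper's swap argument is essentially a special case of the cyclical-monotonicity machinery it deploys again for general $k$ in Theorem~\ref{thm:informal_basin_of_attraction}. The only minor point your writeup elides is the handling of ties in the sort at the split index, but as you note, strict separation $\Delta\cos\theta>2$ guarantees the $n/2$-th and $(n/2+1)$-th largest projections come from different planted clusters and are therefore distinct, so the resulting partition is unambiguous (and, incidentally, the LP optimum is unique and integral).
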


\begin{proof}
We prove the contrapositive.
Suppose BalLOT does not achieve one-step recovery.
Then the balanced clustering $C^1_1\sqcup C^1_2=[n]$ after one step of BalLOT is distinct from the planted balanced clustering $C^\natural_1\sqcup C^\natural_2$.
For convenience, we re-index $\bmu^0_1$ and $\bmu^0_2$ as necessary so that
\[
\bigg\langle \frac{\bmu^0_1-\bmu^0_2}{\|\bmu^0_1-\bmu^0_2\|},\frac{\hmu_1-\hmu_2}{\|\hmu_1-\hmu_2\|}\bigg\rangle\geq0,
\]
i.e., the left-hand side is $\cos\theta$, and we index clusters so that $C^1_1$ and $C^1_2$ correspond to $\bmu^0_1$ and $\bmu^0_2$, respectively.
Select $i\in C^1_1\setminus C^\natural_1$ and $j\in C^1_2\setminus C^\natural_2$.
Since $i\in C^1_1$ and $j\in C^1_2$, it follows that
\[
\|\bm{x}_i-\bmu^0_1\|^2+\|\bm{x}_j-\bmu^0_2\|^2
\leq \|\bm{x}_i-\bmu^0_2\|^2+\|\bm{x}_j-\bmu^0_1\|^2.
\]
Expand and rearrange to get
\[
\langle \bm{x}_i-\bm{x}_j,\bmu^0_1-\bmu^0_2\rangle
\geq0.
\]
Writing $\bm{x}_i=\hmu_2+\bm{g}_i$ and $\bm{x}_j=\hmu_1+\bm{g}_j$, then we may further rearrange to get
\[
\langle \bm{g}_i-\bm{g}_j,\bmu^0_1-\bmu^0_2\rangle
\geq \langle \hmu_1-\hmu_2,\bmu^0_1-\bmu^0_2\rangle.
\]
It follows that
\[
\bigg\langle \bm{g}_i-\bm{g}_j,\frac{\bmu^0_1-\bmu^0_2}{\|\bmu^0_1-\bmu^0_2\|}\bigg\rangle
\geq \Delta\cos\theta.
\]
Finally, applying Cauchy--Schwarz and triangle to the left-hand side gives
\[
\Delta\cos\theta
\leq\|\bm{g}_i-\bm{g}_j\|
\leq \|\bm{g}_i\|+\|\bm{g}_j\|
\leq 2.
\qedhere
\]
\end{proof}

\begin{proof}[Proof of Theorem~\ref{thm:informal_basin_of_attraction}]
Suppose $k=2$, and initialize with $\|\bmu^0_1-\hmu_1\|\leq\delta$ and $\|\bmu^0_2-\hmu_2\|\leq\delta$.
By Lemma~\ref{lem.delta cos theta}, it suffices to show that $\Delta\cos\theta>2$.
One may verify that when $\theta$ is maximized, $\bmu^0_1$ and $\bmu^0_2$ reside in a common $2$-dimensional affine space with $\hmu_1$ and $\hmu_2$.
As such, we may assume without loss of generality that the ambient dimension is $2$.
Figure~\ref{fig:visualize_k=2} illustrates how this then reduces to a problem in trigonometry.
In particular, it suffices to take $\delta<((\frac{\Delta}{2})^2-1)^{1/2}$.

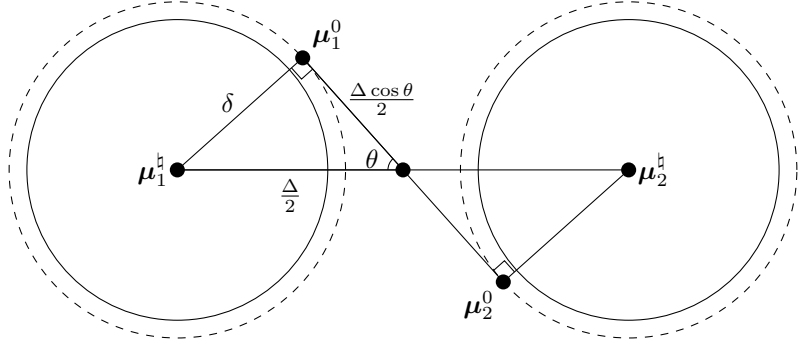
\begin{figure}[t]
    \centering
    \begin{tikzpicture}[scale =2]
    \draw[dashed] (-3/2,0) circle ({sqrt(5)/2});
    \draw[dashed] (3/2,0) circle ({sqrt(5)/2});
    \draw (-3/2,0) circle (1cm);
    \draw (3/2,0) circle (1cm);
    \fill[black] ({5/6-3/2} ,{sqrt(5)/3}) circle (0.05);
    \fill[black] ({-5/6+3/2} , {-sqrt(5)/3} ) circle (0.05);
    \fill[black]  (-3/2, 0) circle (0.05);
    \fill[black]  (3/2, 0) circle (0.05);
    \fill[black] (0,0) circle (0.05);
    \node[left] at (-3/2,0) {$\hmu_1$};
    \node[right] at (3/2,0) {$\hmu_2$};
    \node[above right] at ({5/6-3/2} ,{sqrt(5)/3})  {$\bmu^0_1$};
    \node[below left] at ({-5/6+3/2} ,{-sqrt(5)/3}) {$\bmu^0_2$};

    \coordinate (hmu1) at (-3/2, 0);
    \coordinate (hmu2) at (3/2, 0);
    \coordinate (bmu1) at ({5/6-3/2}, {sqrt(5)/3});
    \coordinate (bmu2) at ({-5/6+3/2}, {-sqrt(5)/3});
    \coordinate (zero) at (0,0);
    \draw (hmu1) -- (bmu1) node[pos=0.3,  above left, xshift=5mm, yshift=2mm] {$ \delta~$};
    \draw (bmu1) -- (bmu2);
    \draw (hmu2) -- (bmu2);
    \draw (hmu1) -- (hmu2);
    \draw (hmu1) -- (zero) node[pos=0.5,  below, xshift=0mm, yshift=-0.05mm]{$ \frac{\Delta}{2}$};
    \draw (bmu1) -- (zero) node[pos=0.6,  above, xshift=2.5mm, yshift=+0.05mm]{$ \frac{\Delta\cos\theta}{2}$};
    \draw pic[draw, angle radius=2mm] {right angle = hmu1--bmu1--bmu2};
    \draw pic[draw, angle radius=2mm] {right angle = bmu1--bmu2--hmu2};
    \draw pic[draw, angle radius = 2mm]
    {angle = bmu1--zero--hmu1};

    \node[above left] at (-0.1,-0.05) {$\theta$};
    \end{tikzpicture}
    \caption{The two solid circles with unit radius are centered at $\hmu_1$ and $\hmu_2$, and these centers have distance $\Delta$. The two dashed circles denote the initialization neighborhood of radius $\delta$ that achieves one-step recovery. By the Pythagorean theorem, the sufficient condition $\Delta\cos\theta > 2$ from Lemma~\ref{lem.delta cos theta} is equivalent to $\delta < ((\frac{\Delta}{2})^2 - 1)^{1/2}$.}
    \label{fig:visualize_k=2}
\end{figure}

Now suppose $k>2$.
We will prove the result by way of contradiction.
In particular, suppose we initialize uniformly within $\delta<\frac{\Delta}{2}-1$ of the ball centers, but BalLOT \textit{does not} achieve one-step recovery.
Then the balanced clustering $C^1_1\sqcup \cdots\sqcup C^1_k=[n]$ after one step of BalLOT is distinct from the planted balanced clustering $C^\natural_1\sqcup \cdots \sqcup C^\natural_k$.

First, we re-index so that $\|\bmu^0_j-\hmu_j\|\leq\delta$ for all $j$.
One may verify that the assumption $\delta<\frac{\Delta}{2}-1$ implies that each $\bmu^0_j$ is closer to $\hmu_j$ than any other $\hmu_i$.
With this re-indexing, the points that are misclustered by $\bm{F}^1$ are indexed by $\bigcup_{j\in[k]}C^1_j\setminus C^\natural_j$.

We may express this misclustering in terms of a directed graph with vertex set $[k]$.
For each $j\in[k]$, every $p\in C^1_j\setminus C^\natural_j$ resides in $C^\natural_{j'}$ for some $j'=j'(p)\neq j$, which we represent by a directed edge $j\to j'$ labeled by $p$.
This directed graph gives instructions for how to correct all of the misclustered indices.
Since $C^1_1\sqcup \cdots\sqcup C^1_k=[n]$ and $C^\natural_1\sqcup \cdots \sqcup C^\natural_k=[n]$ are balanced, this directed graph is Eulerian, and so it can be decomposed into disjoint cycles, each of length at most $k$. 
Fix any such cycle decomposition.

Next, given any cycle in the cycle decomposition, denote the length of the cycle by $k'\leq k$, collect the edge labels in cycle order to get $p_1,\ldots,p_{k'}\in[n]$, and denote the edge with label $p_i$ by $j_i\to j_{i+1}$ (with $j_{k'+1}:=j_1$).
By the optimality of $\bm{F}^1$ for $\bmu^0$, if we were to permute indices along this cycle, the value of $f(\cdot,\bmu^0)$ would increase:
\[
\sum_{i\in[k']}\|\bm{x}_{p_i}-\bmu^0_{j_i}\|^2
\leq\sum_{i\in[k']}\|\bm{x}_{p_i}-\bmu^0_{j_{i+1}}\|^2.
\]
(This is known as \textit{cyclical monotonicity}; see Theorem 1.38 in \cite{santambrogio2015optimal}, for example.)
Since $p_i\in C^\natural_{j_{i+1}}$, we may write $\bm{x}_{p_i}=\hmu_{j_{i+1}}+\bm{g}_{p_i}$, and so the above inequality rearranges to
\[
\sum_{i\in [k']}
\left[\langle \bm{g}_{p_i} , \bmu^0_{j_i} - \bmu^0_{j_{i+1}}\rangle-
\frac{1}{2} \left( \|\bmu^0_{j_i} - \hmu_{j_{i+1}}\|^2 - \|\bmu^0_{j_i} - \hmu_{j_i}\|^2 \right)\right]
\geq 0.
\]
As such, one of the terms (say, the $i^\star$th term) is nonnegative.
Thus, taking $a:=j_{i^\star}$ and $b:=j_{i^\star+1}\neq a$, division by $\|\bmu^0_{a} - \bmu^0_{b}\|$ gives
\[
\frac{\|\bmu^0_{a} - \hmu_{b}\|^2 - \|\bmu^0_{a} - \hmu_{a}\|^2 }{2\|\bmu^0_{a} - \bmu^0_{b}\|}
\leq\left\langle \bm{g}_{p_{i^\star}} , \frac{\bmu^0_{a} - \bmu^0_{b}}{\|\bmu^0_{a} - \bmu^0_{b}\|}\right\rangle
\leq 1,
\]
where the last step applies Cauchy--Schwarz.
We will show that the left-hand side is also strictly greater than $1$, thereby delivering the desired contradiction.
To this end, denote $\bm{v}_a:=\bmu^0_a - \hmu_a$, $\bm{v}_b:=\bmu^0_b - \hmu_b$, and $\bm{w}_{ab}:=\bmu^0_a- \hmu_b$.
Then
\begin{align*}
\frac{\|\bmu^0_{a} - \hmu_{b}\|^2 - \|\bmu^0_{a} - \hmu_{a}\|^2 }{2\|\bmu^0_{a} - \bmu^0_{b}\|}
& = \frac{2\la \bm{v}_a , \bm{w}_{ab} \ra + \|\bm{w}_{ab} \|^2}{2\| \bm{w}_{ab} + \bm{v}_a - \bm{v}_b \|} \\
& \ge \frac{2\la \bm{v}_a , \bm{w}_{ab} \ra + \|\bm{w}_{ab} \|^2}{2(\|\bm{w}_{ab}+\bm{v}_a \| + \delta)}
\ge \min_{x^2 + y^2 \le \delta^2} \frac{2x\|\bm{w}_{ab} \| + \| \bm{w}_{ab}\|^2}{2(\delta + \sqrt{(x + \| \bm{w}_{ab}\|)^2 + y^2})},
\end{align*}
where the first inequality uses the fact that the numerator is positive (due to our assumption on $\delta$), while the second inequality relaxes $\bm{v}_a$ to be any vector in the span of $\bm{v}_a$ and $\bm{w}_{ab}$ that has norm at most $\delta$.
Continuing, we apply the fact that $y^2\leq\delta^2-x^2$ to reduce to a single-variable optimization:
\begin{align*}
\min_{x^2 + y^2\le \delta^2}\frac{2x\| \bm{w}_{ab}\| + \| \bm{w}_{ab}\|^2}{2(\delta + \sqrt{(x + \|\bm{w}_{ab}\|)^2 + y^2})}
& \ge \min_{|x|\le \delta} \frac{2x\| \bm{w}_{ab}\| + \| \bm{w}_{ab}\|^2}{2(\delta + \sqrt{(x + \|\bm{w}_{ab}\|)^2 + \delta^2- x^2})} \\
& = \min_{|x|\le \delta} \frac{\sqrt{\|\bm{w}_{ab}\|^2 + 2x\|\bm{w}_{ab}\|+ \delta^2}-\delta}{2}
 \ge \frac{\sqrt{(\Delta-\delta)^2}-\delta}{2}.
\end{align*}
Putting everything together, we have
\[
1
\geq \frac{\|\bmu^0_{a} - \hmu_{b}\|^2 - \|\bmu^0_{a} - \hmu_{a}\|^2 }{2\|\bmu^0_{a} - \bmu^0_{b}\|}
\geq\frac{\sqrt{(\Delta-\delta)^2}-\delta}{2}
= \frac{\Delta}{2} - \delta 
> 1,
\]
a contradiction.
\end{proof}

 \subsection{Proofs of Theorems~\ref{thm.k=2 misclustering rate} and~\ref{thm:misclutering ratio}}
\label{sec: probabilistic conditions}

\begin{proof}[Proof of Theorem~\ref{thm.k=2 misclustering rate}]
First, (a) follows immediately from Lemma~\ref{lem.delta cos theta}.

For (b), we first follow the proof of Lemma~\ref{lem.delta cos theta}: 
We re-index $\bmu^0_1$ and $\bmu^0_2$ as necessary so that
\[
\bigg\langle \frac{\bmu^0_1-\bmu^0_2}{\|\bmu^0_1-\bmu^0_2\|},\frac{\hmu_1-\hmu_2}{\|\hmu_1-\hmu_2\|}\bigg\rangle
\geq0,
\]
and then note that any misclutered indices $i\in C^1_1\setminus C^\natural_1$ and $j\in C^1_2\setminus C^\natural_2$ necessarily satisfy
\[
\bigg\langle \bm{g}_i-\bm{g}_j,\frac{\bmu^0_1-\bmu^0_2}{\|\bmu^0_1-\bmu^0_2\|}\bigg\rangle
\geq \Delta\cos\theta.
\]
For each $i\in C^\natural_2$ and $j\in C^\natural_1$, let $B_{ij}$ indicate the event that
$\langle \bm{g}_i-\bm{g}_j,\frac{\bmu^0_1-\bmu^0_2}{\|\bmu^0_1-\bmu^0_2\|}\rangle
\geq \Delta\cos\theta$.
Since $|C^1_1\setminus C^\natural_1|=|C^1_2\setminus C^\natural_2|$, then the one-step misclustering rate $R_1$ satisfies
\[
R_1^2
=\bigg(\frac{|C^1_1\setminus C^\natural_1|+|C^1_2\setminus C^\natural_2|}{n}\bigg)^2
=\frac{|C^1_1\setminus C^\natural_1|\cdot |C^1_2\setminus C^\natural_2|}{(n/2)^2}
\leq\frac{1}{(n/2)^2}\sum_{i\in C^\natural_2}\sum_{j\in C^\natural_1}B_{ij}.
\]
Now select an ensemble of bijections $\pi_1,\ldots,\pi_{n/2}\colon C^\natural_2\to C^\natural_1$ such that for every $i\in C^\natural_2$ and $j\in C^\natural_1$, there is a unique $k\in[n/2]$ such that $\pi_k(i)=j$.
(For example, after re-indexing $C^\natural_2$ and $C^\natural_1$ by $[n/2]$, one could define each $\pi_k$ by adding $k$ to the input modulo $n/2$; one can think of this as a $1$-factorization of the complete bipartite graph $K_{n/2,n/2}$.)
Then
\[
R_1^2
\leq\frac{1}{(n/2)^2}\sum_{i\in C^\natural_2}\sum_{j\in C^\natural_1}B_{ij}
=\frac{1}{n/2}\sum_{k\in [n/2]}\frac{1}{n/2}\sum_{i\in C^\natural_2}B_{i,\pi_k(i)}.
\]
Notably, the terms of this sum are Bernoulli random variables with some common success probability~$p$, and so our upper bound on $R_1^2$ has expectation $p$.
We convert this to a high-probability bound by leveraging concentration of measure.
First, since the terms in the inner sum are independent, Hoeffding's inequality gives
\[
\mathbb{P}\bigg\{ \frac{1}{n/2} \sum_{i\in C_2^\natural} B_{i,\pi_k(i)} \ge p + t \bigg\} 
\le  \exp(-nt^2).
\]
Next, the union bound delivers an estimate of the entire sum:
\begin{align*}
\mathbb{P}\bigg\{
\frac{1}{(n/2)^2} \sum_{i\in C_2^\natural} \sum_{j\in C_1^\natural} B_{ij} \ge p + t \bigg\} 
& \le \mathbb{P}\bigg\{ \frac{1}{n/2}\sum_{k\in [n/2]}\frac{1}{n/2} \sum_{i\in C_2^\natural} B_{i,\pi_k(i)} \ge p + t \bigg\}
\\
& \le \sum_{k\in[n/2]} \mathbb{P}\bigg\{ \frac{1}{n/2} \sum_{i\in C_2^\natural} B_{i,\pi_k(i)} \ge p + t \bigg\}
\le \frac{n}{2}\cdot \exp(-nt^2).
\end{align*}
We will take $t:=\sqrt{\frac{1}{n}\log\left(\frac{n}{2\varepsilon}\right)}$ so that this failure probability is at most $\varepsilon$.
It remains to estimate $p$:
\begin{align*}
p 
& = \mathbb{P}\left\{ \bigg\langle 
\bm{g}_i - \bm{g}_j,\frac{\bmu^0_1 - \bmu^0_2}{\| \bmu^0_1 - \bmu^0_2 \|} \bigg\rangle \ge \Delta\cos\theta
\right\}
\\
& \le \mathbb{P}\left\{ 
\bigg\langle \frac{\bm{g}_i - \bm{g}_j}{\| \bm{g}_i - \bm{g}_j\|} , \frac{\bmu^0_1 - \bmu^0_2}{\| \bmu^0_1 - \bmu^0_2 \|}  \bigg\rangle \ge \frac{\Delta \cos\theta}{2} 
\right\}
\le \exp\bigg(-\frac{d-1}{4}\cdot (\Delta\cos\theta)^2\bigg),
\end{align*} 
where the first inequality uses the fact that $\| \bm{g}_i - \bm{g}_j\|\leq \| \bm{g}_i \| + \| \bm{g}_j\|\leq 2$ almost surely, and the second inequality follows from Proposition~10.3.1 in~\cite{bobkov2023concentration}, combined with the fact that $\frac{\bm{g}_i - \bm{g}_j}{\| \bm{g}_i - \bm{g}_j\|}$ is uniformly distributed on the unit sphere.
\end{proof}

\begin{proof}[Proof of Theorem~\ref{thm:misclutering ratio}]
We borrow many ideas from the proof of Theorem~\ref{thm:informal_basin_of_attraction}.
First, we re-index $\bmu^0$ as necessary so that $\max_{j\in[k]}\|\bmu^0_j-\hmu_j\|<\Delta/2$.
Next, for each $j\in[k]$, an Eulerian digraph argument gives that for every $i\in C^1_j\setminus C^\natural_j$, there exists $p=p_j(i)\in C^1_a\cap C^\natural_b$ for some $a=a_j(i)$ and $b=b_j(i)$ in $[k]$ with $a\neq b$ such that
\begin{equation}
\label{eq.necessary inequality per miscluster}
\frac{\|\bmu^0_{a} - \hmu_{b}\|^2 - \|\bmu^0_{a} - \hmu_{a}\|^2 }{2\|\bmu^0_{a} - \bmu^0_{b}\|}
\leq\left\langle \bm{g}_{p} , \frac{\bmu^0_{a} - \bmu^0_{b}}{\|\bmu^0_{a} - \bmu^0_{b}\|}\right\rangle.
\end{equation}
Furthermore, this map $p_j\colon C^1_j\setminus C^\natural_j\to[n]$ is injective since the underlying cycle decomposition consists of disjoint cycles.
Given $a,b\in[k]$ with $a\neq b$ and $p\in C^\natural_b$, let $B_{a,b,p}$ indicate the event that \eqref{eq.necessary inequality per miscluster} holds.
Then we have
\[
|C^1_j\setminus C^\natural_j|
\leq \sum_{\substack{a,b\in[k]\\a\neq b}}\sum_{p\in C^\natural_b} B_{a,b,p}.
\]
(Notably, this holds for every $j\in[k]$.)
As such, the misclustering rate $R_1$ satisfies
\[
R_1
=\frac{1}{n}\sum_{j\in[k]}|C^1_j\setminus C^\natural_j|
\leq\frac{k}{n}\sum_{\substack{a,b\in[k]\\a\neq b}}\sum_{p\in C^\natural_b} B_{a,b,p}.
\]
Much like the proof of Theorem~\ref{thm.k=2 misclustering rate}, the inner sum consists of iid Bernoulli random variables, and so it can be estimated using Hoeffding's inequality, and then the outer sum can be estimated using the union bound.
One can verify that the resulting bound on $R_1$ exhibits the claimed behavior.
\end{proof}

\subsection{Proof of Theorem~\ref{thm:informal_choices of initialization}}
\label{sec:initialization}

For (a), we first show that there are points in $C^\natural_1$ and $C^\natural_2$ of distance at least $\Delta$:
\begin{align*}
\max_{\substack{i\in C^\natural_1\\j\in C^\natural_2}}\|\bm{x}_i-\bm{x}_j\|^2
&\geq\frac{1}{(n/2)^2}\sum_{i\in C^\natural_1}\sum_{j\in C^\natural_2}\|\bm{x}_i-\bm{x}_j\|^2\\
&=\frac{1}{(n/2)^2}\sum_{i\in C^\natural_1}\sum_{j\in C^\natural_2}\|(\bm{x}_i-\hmu_1)-(\bm{x}_j-\hmu_2)+(\hmu_1-\hmu_2)\|^2\\
&=\frac{1}{n/2}\sum_{i\in C^\natural_1}\|\bm{x}_i-\hmu_1\|^2+\frac{1}{n/2}\sum_{j\in C^\natural_2}\|\bm{x}_j-\hmu_2\|^2+\|\hmu_1-\hmu_2\|^2\\
&\geq\|\hmu_1-\hmu_2\|^2\\[1em]
&=\Delta^2.
\end{align*}
Since points within each planted cluster have distance at most $2<2\sqrt{2}\leq\Delta$, the diameter is only achieved by points from different planted clusters, and so the result follows from Theorem ~\ref{thm:informal_basin_of_attraction}.

For (b), denote $K:=\lceil k\log(2k/\varepsilon)\rceil$, and let $t_i\in\{1,\ldots,n\}$ denote the $i$th random index drawn for each $i\in\{1,\ldots,K\}$.
We will use a coupon-collecting argument to ensure that, with high probability, each planted cluster is sampled by $\{\bm{x}_{t_i}\}_{i\in[K]}$.
Indeed, the probability that some planted cluster is not sampled is at most $k$ times the probability that the first cluster is not sampled, which in turn is
\[
k\cdot \bigg(\frac{n-(n/k)}{n}\bigg)\bigg(\frac{n-(n/k)-1}{n-1}\bigg)\cdots\bigg(\frac{n-(n/k)-(K-1)}{n-(K-1)}\bigg)
\leq k\cdot\bigg(1-\frac{1}{k}\bigg)^K
\leq k\cdot e^{-K/k}
\leq \frac{\varepsilon}{2}.
\]
Next, we recall that the data points were drawn according to
\[
\bm{x}_{t}
=\hmu_{\sigma(t)}+\bm{g}_{t},
\]
with $\bm{g}_1,\ldots,\bm{g}_n$ being independent with the same distribution as some random vector $\bm{g}$.
Our assumption on $\mathbb{E}\|\bm{g}\|^2$ then allows us to apply Markov's inequality:
\begin{align*}
\mathbb{P}\bigg\{\exists\, i\in[K],~\|\bm{g}_{t_i}\|\geq\frac{\Delta}{2}-1
\bigg\}
&=\mathbb{E}\,\mathbb{P}_{t_1,\ldots,t_K}\bigg\{\exists \,i\in[K],~\|\bm{g}_{t_i}\|\geq\frac{\Delta}{2}-1
\bigg\}\\
&\leq K\cdot \mathbb{P}\bigg\{\|\bm{g}\|\geq\frac{\Delta}{2}-1\bigg\}\\
&\leq K\cdot\frac{\mathbb{E}\|\bm{g}\|^2}{(\frac{\Delta}{2}-1)^2}\\
&\leq\frac{\varepsilon}{2}.
\end{align*}
Overall, with probability at least $1-\varepsilon$, it holds that every planted cluster is sampled, and furthermore, each sample is within $\frac{\Delta}{2}-1$ of the corresponding ball center.

Next, any two of these $K$ proto-means are ``adjacent'' if their distance is at most $\min\{\Delta-2,2\}$.
Notably, this occurs precisely when the proto-means sample the same planted cluster.
Indeed, if $t_i$ and $t_j$ sample the same planted cluster, then
\[
\|\bm{x}_{t_i}-\bm{x}_{t_j}\|
<\|\bm{g}_{t_i}\|+\|\bm{g}_{t_j}\|
\leq 2\min\{\tfrac{\Delta}{2}-1,1\}
=\min\{\Delta-2,2\}.
\]
(The first inequality above is strict with probability $1$.)
On the other hand, if $t_i$ and $t_j$ sample different planted clusters, then
\[
\|\bm{x}_{t_i}-\bm{x}_{t_j}\|
>\Delta-\big(\|\bm{g}_{t_i}\|+\|\bm{g}_{t_j}\|\big)
\geq\Delta-\min\{\Delta-2,2\}
=\max\{\Delta-2,2\}
\geq\min\{\Delta-2,2\}.
\]
(The first inequality above is strict with probability $1$.)
The result then follows from Theorem ~\ref{thm:informal_basin_of_attraction}.

\section{Discussion}
\label{sec.discussion}

In this paper, we introduced BalLOT and E-BalLOT, and we proved several performance guarantees.
This section presents several opportunities for follow-on work.

\medskip

\noindent
\textbf{Convergence analysis.}
While we only identified sufficient conditions for one-step recovery under the stochastic ball model, we empirically observe that BalLOT consistently converges to the planted clusters; see Experiment~\ref{exp.exact recovery rate vs Delta}.
As such, a convergence analysis would be interesting.

\medskip

\noindent
\textbf{Unbalanced clustering.}
One might be interested in generalizations of balanced clustering in which unbalanced cluster sizes are specified.
(This might occur, for example, in cases where one wants a clustering that is as balanced as possible, but with $k$ not dividing $n$.)
We note that cyclical monotonicity is available in this more general setting (see \cite{santambrogio2015optimal,DePascal2023cyclical}), but it is not clear how to generalize the Birkhoff--von Neumann theorem to this setting.
In particular, is the global average landscape still benign (as in Theorem~\ref{thm: informal_average_landscape}) in this more general setting?

\medskip

\noindent
\textbf{Other mixture models.}
While our theoretical guarantees focused on stochastic ball models, Experiment~\ref{exper.gmm} indicates that BalLOT still performs well for other mixture models.
Can one derive theoretical guarantees in such settings?

\medskip

\noindent
\textbf{Guarantees for E-BalLOT.}
Considering Experiment~\ref{exper.runtime comparison}, we observe that E-BalLOT exhibits computational advantages over BalLOT, and yet all of our theoretical guarantees concern BalLOT.
This suggests several opportunities for further investigation.
For example, under what conditions is E-BalLOT guaranteed to terminate?
Unfortunately, cyclical monotonicity does not hold in this setting, so planted recovery proofs do not easily transfer.
(Instead, the entropic regularized version satisfies something called \textit{cyclical invariance}; see Lemma~2.6 and
Theorem~4.2 in~\cite{Nutz2021EntropicOT}.) 

\section*{Acknowledgments}

DGM was supported by NSF DMS 2220304.

\begingroup
\raggedright
\bibliography{references}
\endgroup

\end{document}